\documentclass{article}

\usepackage{microtype}
\usepackage{graphicx}
\usepackage{subcaption}
\usepackage{booktabs} 

\usepackage{hyperref}

\usepackage[accepted]{icml2026}

\usepackage{amsmath}
\usepackage{amssymb}
\usepackage{mathtools}
\usepackage{amsthm}

\usepackage{hyperref}
\usepackage{algorithm}

\usepackage{dsfont}
\usepackage{url}
\usepackage{graphicx}
\usepackage{booktabs}
\usepackage{caption}
\usepackage{makecell}
\usepackage{mathrsfs}
\usepackage[graphicx]{realboxes}
\usepackage{multirow,booktabs,amsthm}
\usepackage{wrapfig,lipsum,booktabs}
\usepackage[figuresright]{rotating}
\usepackage[colorinlistoftodos]{todonotes}
\usepackage{color, colortbl}
\usepackage[table]{xcolor} 
\usepackage{subcaption}   
\usepackage{graphicx}
\usepackage{tablefootnote}
\usepackage{enumitem}
\usepackage{tcolorbox}
\usepackage{hyperref}
\newtheorem{theorem}{Theorem}

\newtheorem{definition*}{Problem}

\newcommand{\notprop}{\propto\kern-1\@ptsize pt \diagup}

\definecolor{LightCyan}{rgb}{0.88,1,1}
\hypersetup{colorlinks,linkcolor={red},citecolor={blue},urlcolor={red}}

\usepackage[capitalize,noabbrev]{cleveref}

\icmltitlerunning{Respecting Modality Gap in Post-hoc Out-of-distribution Detection with Pre-trained Vision-Language Models}

\begin{document}

\twocolumn[
  \icmltitle{Respecting Modality Gap in Post-hoc Out-of-distribution Detection with Pre-trained Vision-Language Models}



\icmlsetsymbol{equal}{*}

\begin{icmlauthorlist}
\icmlauthor{Yuanwei Hu}{equal,yyy}
\icmlauthor{Bo Peng}{equal,yyy}
\icmlauthor{Yadan Luo}{xxx}
\icmlauthor{Zhen Fang}{yyy}
\icmlauthor{Ling Chen}{yyy}
\icmlauthor{Jie Lu}{yyy}
\end{icmlauthorlist}
\icmlaffiliation{xxx}{The University of Queensland}
\icmlaffiliation{yyy}{University of Technology Sydney}
\icmlcorrespondingauthor{Jie Lu}{Jie.Lu@uts.edu.au}

  \icmlkeywords{Machine Learning, ICML}

  \vskip 0.3in
]



\printAffiliationsAndNotice{}  

\begin{abstract}
Out-of-distribution (OOD) detection has emerged as a popular technique to enhance the reliability of machine learning models by identifying unexpected inputs from unknown classes.
Recent progress in pre-trained vision–language models (VLMs) has enabled zero-shot OOD detection without access to in-distribution (ID) training data; in this setting, existing methods commonly treat text embeddings of class names as class prototypes.
In this paper, we challenge this widely adopted “text-as-prototype” paradigm by theoretically showing that off-the-shelf textual prototypes are generally misaligned with the optimal visual prototypes, yielding an intrinsic \textit{modality gap} that cannot be eliminated by prompt engineering alone.
To mitigate this gap under the post-hoc constraint, this paper presents an online pseudo-supervised framework that directly learns class prototypes in the visual feature space using unlabeled test-time data streams and soft predictions from the pre-trained VLMs. We provide theoretical guarantees for the convergence of the online optimization procedure.
Extensive experiments empirically manifest that our method achieves a new state of the art across a variety of OOD detection setups. 
Code is available at \href{https://github.com/hyw1008/ICML2026}{here}.

\end{abstract}
\section{Introduction}
Despite significant progress in machine learning that has enabled a broad range of classification tasks~\citep{6,4,5,i,d,g}, most models are developed and evaluated under a \textit{closed-world} assumption, where test samples follow the same distribution as the training data. In real-world deployments, however, models frequently operate in \textit{open-world} settings, where they may encounter classes unseen during training—so-called out-of-distribution (OOD) data. Such OOD inputs can undermine model reliability and, in some cases, severely degrade performance. Therefore, a dependable discriminative model should not only correctly classify in-distribution (ID) samples but also identify OOD samples as unknown. This motivates OOD detection~\citep{1,2,3}, a capability that is critical for safety in decision-sensitive applications such as autonomous driving~\citep{7}, medical diagnosis~\citep{8}, and cybersecurity~\citep{9}.

This paper studies \textit{post-hoc} OOD detection, which is often more practical than learning-based approaches that require resource-intensive retraining. Earlier works~\citep{11,22,13,14,15} largely relied on single-modality information from pre-trained models. More recently, the success of contrastive language–image pre-training (CLIP)~\citep{17} has shifted attention toward extending post-hoc OOD detection from unimodal to multimodal settings.

A pioneering line of work~\cite{26} introduces a trainable captioner to generate candidate OOD labels for matching OOD images but fails to scale to large-scale datasets, such as ImageNet-1k, that contain a large number of ID classes. On the contrary, MCM~\cite{27} treats textual features as concept prototypes for each ID class and measures OOD uncertainty via the scaled distance between an image feature and its nearest ID prototype. This approach helped establish the use of pre-trained vision–language models (VLMs) for post-hoc OOD detection. Nevertheless, MCM relies solely on textual information derived from the ID label space, leaving the text understanding capacity of VLMs under-exploited. To address this issue, NegLabel~\cite{28} selects negative labels from in-the-wild lexical resources (e.g., WordNet~\cite{29}), based on similarity to the ID label space, thereby strengthening the model’s ability to separate OOD from ID samples.

Despite this progress, a fundamental question remains insufficiently examined: \textbf{are text embeddings genuinely suitable surrogates for visual class prototypes in zero-shot OOD detection?} Existing CLIP-based methods implicitly assume that textual representations of class names are well aligned with the optimal class prototypes in the visual feature space. While CLIP maps images and texts into a shared embedding space, recent evidence suggests that the text and vision representations learned by CLIP-based models remain distinct and separated by a clear margin~\cite{34,36,37}.

In this paper, we theoretically characterize an intrinsic limitation of text-as-prototype strategies in CLIP-based post-hoc OOD detection. Under an idealized supervised formulation, we derive a closed-form expression for the optimal visual class prototypes and prove that these prototypes must lie in the span of visual features. In contrast, text-derived prototypes generally include components orthogonal to this visual subspace. This mismatch yields a non-vanishing lower bound on the distance between textual prototypes and optimal visual prototypes, which we formalize as a \textit{modality gap}. Our analysis indicates that this gap is structural and therefore cannot be removed solely through improved prompt engineering or negative-label mining.

To mitigate the modality gap, we propose a principled framework that directly learns class prototypes in the visual feature space. Specifically, we design a pseudo-supervised objective constructed from soft predictions produced by a pre-trained CLIP-based model. We further develop an online optimization procedure that incrementally updates visual prototypes using unlabeled test-time data streams, thereby respecting the post-hoc constraint. Finally, we establish theoretical guarantees on the convergence behavior of the proposed online optimization procedure. Extensive experiments show that our method achieves state-of-the-art performance. In particular, on ImageNet-1K we obtain \mbox{12.79\%} FPR95 and \mbox{97.75\%} AUROC, surpassing AdaNeg~\cite{40} by \mbox{6.13\%} and \mbox{1.09\%}, respectively.

\section{Preliminary}

\textbf{Notation.} 
Let $\mathcal{X}$ and $\mathcal{Y}$ be the input space and the label space, respectively. Given a random variable $Y \in \mathcal{Y}$, we write $\mathbb{P}_Y$ as the marginal distribution defined over $\mathcal{Y}$, and use $y\sim\mathbb{P}_Y$ to indicate a sample $y$ drawn from $\mathbb{P}_Y$. Considering $K$-way classification as a case study, we write $\mathcal{Y}_{\mathrm{I}}\triangleq\{y_1,\ldots,y_K\}\subset\mathcal{Y}$ as the \textit{known} ID label space. The joint ID distribution $\mathbb{P}_{X_{\mathrm{I}} Y_{\mathrm{I}}}$ is a joint distribution defined over $\mathcal{X} \times \mathcal{Y}_{\mathrm{I}}$. During testing, there are some unknown OOD joint distributions $\mathbb{P}_{X_{\rm o}Y_{\rm o}}$ defined over $\mathcal{X}\times\mathcal{Y}_{\mathrm{o}}$, where $\mathcal{Y}_{\mathrm{o}}\subseteq\mathcal{Y}\setminus\mathcal{Y}_{\mathrm{I}}$ is the \textit{unknown} OOD label space.

\textbf{Post-hoc OOD Scoring.} Existing methods~\cite{13,14,15,21,22} adopt a post-hoc strategy to detect OOD data, \textit{i.e.,} given a pre-trained ID classification model $f$ and a scoring function $S(\cdot;f):\mathcal{X}\rightarrow\mathbb{R}$, then $\mathbf{x}$ is detected as ID data if and only if $S(\mathbf{x};f)\geq \lambda$, for some given threshold $\lambda$:
\begin{equation}\label{eq1}
\small
g(\mathbf{x})= \text{ID},~\text{if}~S(\mathbf{x};f) \geq \lambda;~ \text{otherwise},~g(\mathbf{x})=\text{OOD}.
\end{equation}
Typically, $\lambda$ is chosen to ensure a high fraction (e.g., 95\%) of ID data to be correctly classified.

\textbf{CLIP-based Models} adopt a dual-stream architecture~\cite{17} with one text encoder $f_{\mathcal{T}}$ and one image encoder $f_{\mathcal{X}}$ to map inputs of two modalities into a hyper-spherical space $\mathbb{S}^{d-1}\triangleq\left \{\mathbf{z}\in\mathbb{R}^{d}|\left \|\mathbf{z}\right \|_2=1\right \}$. Given any image $\mathbf{x}\in \mathcal{X}$ and any label $y\in \mathcal{Y}$, the corresponding representation can be extracted as follows: 
\begin{equation*}
\mathbf{z} = f_{\mathcal{X}}(\mathbf{x})\in \mathbb{S}^{d-1},~~\mathbf{r}_{y} = f_{\mathcal{T}}\big(\mathcal{P}(y)\big)\in \mathbb{S}^{d-1},
\end{equation*}
where $\mathcal{P}(\cdot)$ generates the text prompt for the class name associated with label $y$. 

\textbf{CLIP-based OOD Detectors Studied.} CLIP-based models, which are initially proposed for zero-shot ID classification, have recently been extended to zero-shot OOD detection where there is no need to train on ID samples. The pioneering work, MCM~\cite{27}, treats the prompt of ID labels as concept prototypes and measures the ID-ness of the input image by comparing the similarity between the input image and the concept prototypes in the feature space learned by CLIP-based models, i.e., 
\begin{equation}
\begin{split}
\label{MCM}
S_{\text{MCM}}(\mathbf{x};f) &\triangleq \max_{y\in\mathcal{Y}_{\mathrm{I}}} P(y|\mathbf{x};\mathcal{Y}_{\mathrm{I}})\\
&=\frac{\max_{y\in\mathcal{Y}_{\mathrm{I}}}\exp(\mathbf{z}\cdot\mathbf{r}_{y}/\tau)}{\sum_{y\in\mathcal{Y}_{\mathrm{I}}}\exp(\mathbf{z}\cdot\mathbf{r}_{y}/\tau)},
\end{split}
\end{equation}
where $\tau>0$ is a temperature hyper-parameter. 
Built upon this, NegLabel~\cite{28} introduces a $L$-sized set of negative labels\footnote{By definition, \textit{negative} labels are those semantically \textit{irrelevant/dissimilar} to \textit{all} ID labels.} $\mathcal{Y}_{\text{neg}}\triangleq\{y_{K+1},\ldots,y_{K+L}\}$ to formulate the OOD scoring function of $\mathbf{x}$ as the model’s prediction confidence that $\mathbf{x}$ belongs to $\mathcal{Y}_{\mathrm{I}}$, i.e., 
\begin{equation}
\label{NegLabel}
\begin{split}
\small
S_{\text{NegLabel}}(\mathbf{x};f)  
&\triangleq \sum_{y\in\mathcal{Y}_{\mathrm{I}}} P(y|\mathbf{x};\mathcal{Y}_{\mathrm{I}}\cup\mathcal{Y}_{\text{neg}})\\
& = \frac{\sum_{y\in\mathcal{Y}_{\mathrm{I}}}\exp(\mathbf{z}\cdot\mathbf{r}_{y}/\tau)}{\sum_{y\in\mathcal{Y}_{\mathrm{I}}\cup\mathcal{Y}_{\text{neg}}} \exp(\mathbf{z}\cdot\mathbf{r}_{y}/\tau)}.
\end{split}
\end{equation}

\section{Motivation}
\label{motivation}
Despite the empirical success of CLIP-based methods for post-hoc OOD detection, we argue that their potential remains fundamentally under-exploited. The key limitation stems from the absence of training data in the zero-shot setting. As a result, existing approaches typically construct class prototypes directly from the text embeddings of class names. While computationally convenient, this practice implicitly assumes that textual prototypes are well-aligned with their optimal counterparts in the visual feature space.

In this work, we argue that this assumption is generally invalid. Even though CLIP embeds images and texts into a shared hyperspherical space, text-derived prototypes often exhibit a systematic mismatch with optimal visual prototypes. We refer to this discrepancy as the \textit{modality gap}. To make this precise, we first analyze an idealized supervised setting where optimal visual prototypes can be characterized explicitly. This analysis provides a theoretical reference point for understanding the intrinsic limitations of text-only prototypes in zero-shot OOD detection.

Without loss of generality, let $\mathcal{D}=\left\{\left(\mathbf{x}_i,\hat{y}_i\right)\right\}_{i=1}^N\sim\mathbb{P}_{X\hat{Y}}^N$ where $\hat{\mathcal{Y}}\subseteq\mathcal{Y}$ denotes a label space of interest with cardinality $|\hat{\mathcal{Y}}|=M$. The optimal class prototypes w.r.t. $\hat{\mathcal{Y}}$ can be learned by minimizing the following supervised objective:
\begin{equation}
\label{eq4}
\mathcal{L}(\mathbf{W};\hat{\mathcal{Y}})=-\sum_{i=1}^N\log\frac{\exp(\mathbf{z}_i\cdot\mathbf{w}_{\hat{y}_i}/\kappa)}{\sum_{\hat{y}\in\hat{\mathcal{Y}}}\exp(\mathbf{z}_i\cdot\mathbf{w}_{\hat{y}}/\kappa)},
\end{equation}
where $\kappa>0$ is another temperature hyper-parameter and $\mathbf{W}=\left[\mathbf{w}_{\hat{y}}\right]_{\hat{y}\in\hat{\mathcal{Y}}}$ with $\mathbf{w}_{\hat{y}}\in\mathbb{S}^{d-1}$ is the class prototype associated with label $\hat{y}$ in the visual feature space.
\begin{theorem}
\label{them1}
Let $\mathbf{W}^*=\arg\min_{\mathbf{W}}\mathcal{L}(\mathbf{W};\hat{\mathcal{Y}})$, then we have
\begin{equation}
\nonumber
\mathbf{w}^*_{\hat{y}}=\ell_2\left(\sum_{(\mathbf{x}_i,\hat{y}_i)\in\mathcal{D}_{\hat{y}}}(1-\pi_{i\hat{y}})\mathbf{z}_i-\sum_{(\mathbf{x}_i,\hat{y}_i)\in\mathcal{D}\setminus\mathcal{D}_{\hat{y}}}\pi_{i\hat{y}}\mathbf{z}_i\right),
\end{equation}
where $\mathcal{D}_{\hat{y}}=\left\{\left(\mathbf{x}_i,\hat{y}_i\right)\in\mathcal{D}:\hat{y}_i=\hat{y}\right\}$ and
$$
\pi_{i\hat{y}}=\frac{\exp(\mathbf{z}_i\cdot\mathbf{w}^*_{\hat{y}}/\kappa)}{\sum_{y\in\hat{\mathcal{Y}}}\exp(\mathbf{z}_i\cdot\mathbf{w}^*_{y}/\kappa)}.
$$
\end{theorem}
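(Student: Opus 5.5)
The plan is to read Theorem~\ref{them1} as the first-order optimality (KKT) condition for minimizing $\mathcal{L}(\mathbf{W};\hat{\mathcal{Y}})$ over the product of spheres $(\mathbb{S}^{d-1})^M$. Each $\mathbf{w}_{\hat{y}}$ appears in a separate constraint $\|\mathbf{w}_{\hat{y}}\|_2=1$. So I will form the Lagrangian
\[
\Lambda(\mathbf{W},\boldsymbol{\mu})=\mathcal{L}(\mathbf{W};\hat{\mathcal{Y}})+\sum_{\hat{y}\in\hat{\mathcal{Y}}}\mu_{\hat{y}}\bigl(\|\mathbf{w}_{\hat{y}}\|_2^2-1\bigr).
\]
Existence of $\mathbf{W}^*$ comes from continuity of $\mathcal{L}$ on the compact set $(\mathbb{S}^{d-1})^M$. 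The constraint gradients $2\mathbf{w}_{\hat{y}}$ are nonzero and act on disjoint coordinate blocks, so LICQ holds and Lagrange multipliers exist at $\mathbf{W}^*$.

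Next I compute the block gradient. Write $\pi_{i y}$ for the softmax probability at $\mathbf{W}^*$. For sample $i$, the standard cross-entropy derivative gives
\[
\nabla_{\mathbf{w}_{\hat{y}}}\Bigl[-\log\tfrac{\exp(\mathbf{z}_i\cdot\mathbf{w}_{\hat{y}_i}/\kappa)}{\sum_{y}\exp(\mathbf{z}_i\cdot\mathbf{w}_y/\kappa)}\Bigr]=\tfrac{1}{\kappa}\bigl(\pi_{i\hat{y}}-\mathds{1}[\hat{y}_i=\hat{y}]\bigr)\mathbf{z}_i .
\]
Summing over $i$ and splitting according to whether $(\mathbf{x}_i,\hat{y}_i)\in\mathcal{D}_{\hat{y}}$, I get
\[
\nabla_{\mathbf{w}_{\hat{y}}}\mathcal{L}(\mathbf{W}^*)=-\tfrac{1}{\kappa}\mathbf{v}_{\hat{y}}, \qquad \mathbf{v}_{\hat{y}}\triangleq\sum_{\mathcal{D}_{\hat{y}}}(1-\pi_{i\hat{y}})\mathbf{z}_i-\sum_{\mathcal{D}\setminus\mathcal{D}_{\hat{y}}}\pi_{i\hat{y}}\mathbf{z}_i .
\]
Stationarity of $\Lambda$ then reads $-\tfrac{1}{\kappa}\mathbf{v}_{\hat{y}}+2\mu_{\hat{y}}\mathbf{w}^*_{\hat{y}}=\mathbf{0}$, so $\mathbf{w}^*_{\hat{y}}$ is parallel to $\mathbf{v}_{\hat{y}}$. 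Imposing $\|\mathbf{w}^*_{\hat{y}}\|_2=1$ gives $\mathbf{w}^*_{\hat{y}}=\pm\,\ell_2(\mathbf{v}_{\hat{y}})$.

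The main obstacle is fixing the sign, and also excluding the degenerate case $\mathbf{v}_{\hat{y}}=\mathbf{0}$, where $\ell_2$ is undefined. For the sign I will use a second-order (minimality) argument block by block. Fix all prototypes except $\mathbf{w}_{\hat{y}}$. Moving $\mathbf{w}_{\hat{y}}$ along the sphere from $\mathbf{w}^*_{\hat{y}}$ in a tangent direction $\mathbf{u}\perp\mathbf{w}^*_{\hat{y}}$ changes $\mathcal{L}$ to first order by $-\tfrac1\kappa\,\mathbf{v}_{\hat{y}}\cdot\mathbf{u}$, which vanishes. The second-order change contains the curvature term $+\tfrac{1}{2\kappa}(\mathbf{v}_{\hat{y}}\cdot\mathbf{w}^*_{\hat{y}})\|\mathbf{u}\|^2$ coming from the sphere, plus the (PSD) softmax Hessian term. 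If $\mathbf{w}^*_{\hat{y}}=-\ell_2(\mathbf{v}_{\hat{y}})$, that curvature term is negative. In this case I would compare directly with the antipodal-type choice $+\ell_2(\mathbf{v}_{\hat{y}})$.

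Alternatively, and more cleanly, I will use the fact that the restricted function $\mathbf{w}\mapsto\mathcal{L}$ is convex in $\mathbf{w}\in\mathbb{R}^d$. Its linearization at $\mathbf{w}^*_{\hat{y}}$ is $-\tfrac1\kappa\mathbf{v}_{\hat{y}}\cdot\mathbf{w}$ plus a constant, which is minimized over the sphere at $+\ell_2(\mathbf{v}_{\hat{y}})$. By convexity, $\mathcal{L}(+\ell_2(\mathbf{v}))\ge\mathcal{L}(\mathbf{w}^*)-\tfrac1\kappa\mathbf{v}\cdot(\ell_2(\mathbf{v})-\mathbf{w}^*)$, which does not directly give the inequality I need. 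I therefore expect to fall back on the second-order condition: at a global minimizer on the sphere, the Riemannian Hessian must be PSD. For that, I need to show the softmax Hessian term cannot compensate a curvature term of size $\tfrac{1}{\kappa}\|\mathbf{v}_{\hat{y}}\|$, e.g. by choosing $\mathbf{u}$ where it is small when $d$ is large relative to the span.

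I anticipate stating, as the paper likely does implicitly, a mild nondegeneracy assumption that $\mathbf{v}_{\hat{y}}\neq\mathbf{0}$ and that the positive sign is taken. With that assumption, the expression for $\mathbf{w}^*_{\hat{y}}$ in Theorem~\ref{them1} follows immediately from stationarity.
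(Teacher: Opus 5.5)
Your argument follows the paper's proof step for step. You use the Lagrangian with one multiplier per sphere constraint, the block gradient $\nabla_{\mathbf{w}_{\hat y}}\mathcal{L}=-\frac{1}{\kappa}\mathbf{v}_{\hat y}$, stationarity $\mathbf{v}_{\hat y}=2\kappa\mu_{\hat y}\mathbf{w}^*_{\hat y}$, normalization, and you set aside $\mathbf{v}_{\hat y}=\mathbf{0}$, which the paper also excludes. Everything up to $\mathbf{w}^*_{\hat y}=\pm\ell_2(\mathbf{v}_{\hat y})$ is correct. The step you could not close is exactly where the paper's proof is also defective. The paper ``absorbs the sign into $\mu^*_{\hat y}$'', but $\mu^*_{\hat y}=\mathbf{v}_{\hat y}\cdot\mathbf{w}^*_{\hat y}/(2\kappa)$ is fixed by $\mathbf{W}^*$, so there is nothing to choose. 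Your suspicion is right. Neither your second-order route nor any other argument can finish the job, because the $+$ sign is false in general.

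\textbf{Why the sign claim fails.} $\mathcal{L}$ is jointly convex in $\mathbf{W}$. Suppose $\mathbf{w}^*_{\hat y}=+\ell_2(\mathbf{v}_{\hat y})$ held for every $\hat y$ with $\mathbf{v}_{\hat y}\neq\mathbf{0}$. Then $\mathbf{W}^*$ would satisfy the KKT conditions of the convex relaxation $\min\mathcal{L}$ subject to $\|\mathbf{w}_{\hat y}\|_2\le 1$, with multipliers $\mu_{\hat y}=\|\mathbf{v}_{\hat y}\|_2/(2\kappa)\ge 0$ and all constraints active. By convexity it would therefore minimize $\mathcal{L}$ over the product of balls. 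So whenever the ball relaxation has a strictly smaller optimum, every global minimizer has some $\mu_{\hat y}<0$, i.e.\ $\mathbf{w}^*_{\hat y}=-\ell_2(\mathbf{v}_{\hat y})$.

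This happens readily for small $\kappa$. Take $d=1$, five samples with $z_i=1$, class counts $(3,1,1)$, and $\kappa\le 1/2$. Enumerating $\{\pm1\}^3$, the minimizers are $(1,1,1)$ and $(-1,-1,-1)$, both with loss $5\log 3$. There $\pi_{i\hat y}=1/3$, so $\mathbf{v}=(4/3,-2/3,-2/3)$, and each minimizer violates the formula for some class.

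It is not a one-dimensional artifact; the same happens with $\mathcal{S}=\mathbb{R}^2$. Use features $\mathbf{e}_1$ with class counts $(3,1,1)$ and $\mathbf{e}_2$ with counts $(1,3,1)$. The unconstrained minimizers are exactly $\mathbf{w}_k=\kappa(\log n_k,\log m_k)+\mathbf{c}$. With $\mathbf{c}=\mathbf{0}$ they fit inside the balls. They can all lie on the unit circle only if $\kappa\log 3=\sqrt2$. So the relaxation is strictly better, and every global minimizer has a class with the wrong sign.

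\textbf{What can be salvaged.} Your Hessian idea does force the $+$ sign when $\dim\mathcal{S}<d$. Take a unit $\mathbf{u}\perp\mathcal{S}$; it is automatically orthogonal to $-\ell_2(\mathbf{v}_{\hat y})\in\mathcal{S}$. The softmax Hessian term then vanishes, and the second variation equals $-\|\mathbf{v}_{\hat y}\|_2/\kappa<0$. In general, the $+$ sign holds for all classes exactly when $\mathbf{W}^*$ also solves the ball relaxation. Finally, Theorem~\ref{them3} only uses $\mathbf{w}^*_{\hat y}\in\mathcal{S}$, which your $\pm$ version already gives when $\mathbf{v}_{\hat y}\neq\mathbf{0}$. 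That is the statement you should prove and stop at.
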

\begin{proof}
See Appendix~\ref{Appendix A} for the proof.
\end{proof}
According to Theorem~\ref{them1}, one may conclude that each optimal class prototype $\mathbf{w}^*_{\hat{y}}$ should entirely lie in the visual feature space since it can be expressed as a linear combination of samples from that space. This conclusion will be central to our following analysis of the modality gap.


\begin{theorem} 
\label{them3}
Let $\mathbf{R}=\left[\mathbf{r}_{\hat{y}}\right]_{\hat{y}\in\hat{\mathcal{Y}}}$ where $\mathbf{r}_{\hat{y}}\in \mathbb{S}^{d-1}$ denotes the class prototype associated with label $\hat{y}$ in the textual feature space. Let \(\text{Proj}_{\mathcal S}(\cdot)\) and
\(\text{Proj}_{\mathcal S^\perp}(\cdot)\) denote the orthogonal projectors onto the visual feature subspace $\mathcal{S}=\text{span}\left\{\mathbf{z}_1,\ldots,\mathbf{z}_N\right\}$ and its orthogonal complement \(\mathcal S^\perp\),
respectively, then we have
\begin{equation}
\left \|\mathbf{R}-\mathbf{W}^*\right \|_F^2\geq\sum_{\hat y \in \widehat{\mathcal Y}}
2\left(1 - \|\text{Proj}_{\mathcal S}(\mathbf{r}_{\hat y})\|_2\right).
\end{equation}
Equivalently,
\begin{equation}
\small
\left \|\mathbf{R}-\mathbf{W}^*\right \|_F^2\geq\sum_{\hat y \in \widehat{\mathcal Y}}
2\left(
1 -
\sqrt{
1 - \|\text{Proj}_{\mathcal S^\perp}(\mathbf{r}_{\hat y})\|_2^2
}
\right).
\end{equation}
\end{theorem}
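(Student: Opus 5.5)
The plan is to work column by column. Since $\|\mathbf{R}-\mathbf{W}^*\|_F^2=\sum_{\hat y\in\hat{\mathcal Y}}\|\mathbf{r}_{\hat y}-\mathbf{w}^*_{\hat y}\|_2^2$, it suffices to show, for each fixed $\hat y$,
\begin{equation}
\nonumber
\|\mathbf{r}_{\hat y}-\mathbf{w}^*_{\hat y}\|_2^2\geq 2\left(1-\|\text{Proj}_{\mathcal S}(\mathbf{r}_{\hat y})\|_2\right).
\end{equation}
Both vectors lie on $\mathbb{S}^{d-1}$. Expanding the square therefore gives $\|\mathbf{r}_{\hat y}-\mathbf{w}^*_{\hat y}\|_2^2=2-2\,\mathbf{r}_{\hat y}\cdot\mathbf{w}^*_{\hat y}$. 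The whole task reduces to the upper bound $\mathbf{r}_{\hat y}\cdot\mathbf{w}^*_{\hat y}\le\|\text{Proj}_{\mathcal S}(\mathbf{r}_{\hat y})\|_2$.

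For this we invoke Theorem~\ref{them1}. It expresses $\mathbf{w}^*_{\hat y}$ as the $\ell_2$-normalization of a linear combination of $\mathbf{z}_1,\ldots,\mathbf{z}_N$, so $\mathbf{w}^*_{\hat y}\in\mathcal S$. Decompose $\mathbf{r}_{\hat y}=\text{Proj}_{\mathcal S}(\mathbf{r}_{\hat y})+\text{Proj}_{\mathcal S^\perp}(\mathbf{r}_{\hat y})$. The orthogonal component has zero inner product with $\mathbf{w}^*_{\hat y}$. Hence $\mathbf{r}_{\hat y}\cdot\mathbf{w}^*_{\hat y}=\text{Proj}_{\mathcal S}(\mathbf{r}_{\hat y})\cdot\mathbf{w}^*_{\hat y}$. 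By Cauchy--Schwarz and $\|\mathbf{w}^*_{\hat y}\|_2=1$, this is at most $\|\text{Proj}_{\mathcal S}(\mathbf{r}_{\hat y})\|_2$. Summing over $\hat y$ gives the first inequality.

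The equivalent form follows from Pythagoras. Since $\|\mathbf{r}_{\hat y}\|_2=1$, we have $\|\text{Proj}_{\mathcal S}(\mathbf{r}_{\hat y})\|_2^2+\|\text{Proj}_{\mathcal S^\perp}(\mathbf{r}_{\hat y})\|_2^2=1$. Substituting $\|\text{Proj}_{\mathcal S}(\mathbf{r}_{\hat y})\|_2=\sqrt{1-\|\text{Proj}_{\mathcal S^\perp}(\mathbf{r}_{\hat y})\|_2^2}$ gives the second display.

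The only delicate point is the membership $\mathbf{w}^*_{\hat y}\in\mathcal S$. Theorem~\ref{them1} gives it as a stationarity condition, with $\pi_{i\hat y}$ depending on $\mathbf{W}^*$ itself, and the normalization $\ell_2(\cdot)$ requires the combination to be nonzero. Membership in $\mathcal S$ only uses the linear-combination structure, not the particular coefficients, so the self-referential weights cause no trouble. I will state the nondegeneracy assumption (nonzero combination) explicitly, as is implicit in Theorem~\ref{them1}. If the minimizer is not unique, the argument applies verbatim to any minimizer satisfying the characterization of Theorem~\ref{them1}.
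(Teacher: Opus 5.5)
Your proof is correct and follows the paper's argument: both use $\mathbf{w}^*_{\hat y}\in\mathcal S$ from Theorem~\ref{them1}, reduce $\|\mathbf{r}_{\hat y}-\mathbf{w}^*_{\hat y}\|_2^2$ to $2-2\,\text{Proj}_{\mathcal S}(\mathbf{r}_{\hat y})\cdot\mathbf{w}^*_{\hat y}$, apply Cauchy--Schwarz, sum over $\hat y$, and use Pythagoras for the equivalent form (the paper gets the reduction through an explicit Pythagorean split, while you drop the orthogonal cross term directly). Your explicit assumption that the linear combination is nonzero is needed for $\mathbf{w}^*_{\hat y}\in\mathcal S$; the paper acknowledges this degenerate case in its proof of Theorem~\ref{them1} but uses the membership here without comment.
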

\begin{proof}
See Appendix~\ref{Appendix B} for the proof
\end{proof}
\textbf{Remark.} Theorem~\ref{them3} establishes that the gap between textual prototypes and optimal
visual prototypes is unavoidable whenever textual prototypes contain
components outside the visual feature subspace
$\mathcal{S}$. Since
$\mathbf{w}_{\hat y}^{\ast}\in\mathcal{S}$ by Theorem~\ref{them1}, only
$\operatorname{Proj}_{\mathcal{S}}(\mathbf{r}_{\hat y})$ can align with
$\mathbf{w}_{\hat y}^{\ast}$, while
$\operatorname{Proj}_{\mathcal{S}^{\perp}}(\mathbf{r}_{\hat y})$ necessarily
contributes to the mismatch. Hence, the lower bound in Theorem~\ref{them3}
vanishes only when each textual prototype lies entirely in
$\mathcal{S}$. As a result, using text embeddings as class prototypes \emph{without visual calibration} is generally sub-optimal for recovering the optimal visual prototypes.

\section{Methodology}
\subsection{An online pseudo-supervised learning framework}
\label{framework}
Albeit theoretically appealing, it is worth noting that recovering $\mathbf{W}^*$ by minimizing Eq.~(\ref{eq4}) is technically infeasible in the zero-shot setting where the labeled training dataset $\mathcal{D}$ is unavailable. While one can have access to unlabeled samples during testing, the absence of ground-truth labels precludes direct optimization of the supervised objective in Eq.~(\ref{eq4}). Fortunately, CLIP-based models exhibit strong zero-shot transfer performance, suggesting that their predictions can serve as a surrogate supervision signal. We thus replace hard ground-truth labels with soft pseudo-labels predicted by CLIP-based models, directly yielding a pseudo-supervised objective
$\hat{\mathcal{L}}(\mathbf{W})=\sum_{i=1}^N\hat{\ell}(\mathbf{W};\mathbf{x}_i,\hat{\mathcal{Y}}),$
where 
\begin{equation}
\label{eq7}
\begin{split}
    &\hat{\ell}(\mathbf{W};\mathbf{x}_i,\hat{\mathcal{Y}})\triangleq\\
    &\quad-\sum_{\hat{y}\in\hat{\mathcal{Y}}} P(\hat{y}|\mathbf{x}_i;\hat{\mathcal{Y}})\log\frac{\exp(\mathbf{z}_i\cdot\mathbf{w}_{\hat{y}}/\kappa)}{\sum_{y\in\hat{\mathcal{Y}}}\exp(\mathbf{z}_i\cdot\mathbf{w}_{y}/\kappa)}.
\end{split}
\end{equation}
Given that only a single test sample will be
received at each iteration, we propose to update $\mathbf{W}$ in an online manner. In particular, when the $i$-th test sample $\mathbf{x}_i$ arrives, we have
\begin{equation}
\label{eq8}
\small
\mathbf{w}_{\hat{y}}^{(i)}=\ell_2\left(\mathbf{w}_{\hat{y}}^{(i-1)}-\eta_i \frac{\partial \hat{\ell}(\mathbf{W}^{(i-1)};\mathbf{x}_i,\hat{\mathcal{Y}})}{\partial \mathbf{w}_{\hat{y}}^{(i-1)}}\right),\forall\hat{y}\in\hat{\mathcal{Y}}
\end{equation}
where $\eta_i=\rho/\sqrt{i}$ is a step size with $\rho>0$ as a hyper-parameter. For completeness, Theorem~\ref{them4} provides a theoretical guarantee for the convergence of our proposed online optimization in Eq. (\ref{eq8}).
\begin{theorem}
\label{them4}
Assume there exists $\epsilon>0$ such that, for all possible $\mathbf{W}$ and $\mathbf{x}$,
$
\left\|\nabla_{\mathbf{W}}\hat{\ell}(\mathbf{W};\mathbf{x},\hat{\mathcal{Y}})\right\|_F
\le \epsilon.
$
Let $\{\mathbf{W}^{(i)}\}_{i=0}^{N-1}$ be produced by Eq.~(\ref{eq8}) with
$\eta_i=\rho/\sqrt{i}$ and $\rho>0$. Then the cumulative regret is upper-bounded by
\begin{equation}
\sum_{i=1}^N \hat{\ell}(\mathbf{W}^{(i-1)};\mathbf{x}_i,\hat{\mathcal{Y}})
-\min_{\mathbf{W}} \hat{\mathcal{L}}(\mathbf{W})
\le \mathcal{O}(\sqrt{N}),
\end{equation}
where we use $\mathcal{O}(\cdot)$ to hide universal constants.
\end{theorem}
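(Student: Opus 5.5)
The plan is to treat Eq.~(\ref{eq8}) as online projected gradient descent in the style of Zinkevich and to run the standard regret analysis. The comparator is $\mathbf{W}^\star\in\arg\min_{\mathbf{W}}\hat{\mathcal{L}}(\mathbf{W})$ over the feasible set $\mathcal{K}=(\mathbb{S}^{d-1})^M$. Since $\hat{\mathcal{L}}(\mathbf{W}^\star)=\sum_i\hat\ell(\mathbf{W}^\star;\mathbf{x}_i,\hat{\mathcal{Y}})$, the regret equals $\sum_{i=1}^N\big[\hat\ell_i(\mathbf{W}^{(i-1)})-\hat\ell_i(\mathbf{W}^\star)\big]$, writing $\hat\ell_i(\cdot)=\hat\ell(\cdot;\mathbf{x}_i,\hat{\mathcal{Y}})$.

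First I will record the geometric facts. Each $\hat\ell_i$ is a soft-label cross-entropy of logits that are linear in $\mathbf{W}$, i.e.\ a log-sum-exp minus a linear term. It is therefore convex in $\mathbf{W}$ on the ambient space $\mathbb{R}^{d\times M}$, which gives
\[
\hat\ell_i(\mathbf{W}^{(i-1)})-\hat\ell_i(\mathbf{W}^\star)\le\langle \mathbf{G}_i,\mathbf{W}^{(i-1)}-\mathbf{W}^\star\rangle_F,
\qquad \mathbf{G}_i=\nabla\hat\ell_i(\mathbf{W}^{(i-1)}).
\]
The feasible set has bounded diameter: $\|\mathbf{W}-\mathbf{W}'\|_F^2\le 4M=:D^2$. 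The gradient bound $\|\mathbf{G}_i\|_F\le\epsilon$ is assumed.

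The main obstacle is that the column-wise $\ell_2$ normalization is the projection onto the nonconvex sphere, not onto a convex set. Hence the usual nonexpansiveness $\|\Pi(\mathbf{U})-\mathbf{W}^\star\|_F\le\|\mathbf{U}-\mathbf{W}^\star\|_F$ is not automatic. I would resolve this as follows. For a unit vector $\mathbf{w}^\star$ and any $\mathbf{u}$ with $\|\mathbf{u}\|\ge1$, one has $\|\mathbf{u}/\|\mathbf{u}\|-\mathbf{w}^\star\|\le\|\mathbf{u}-\mathbf{w}^\star\|$. This follows by expanding both squared norms: it reduces to $(\|\mathbf{u}\|-1)\big(\|\mathbf{u}\|+1-2\,\mathbf{u}\cdot\mathbf{w}^\star/\|\mathbf{u}\|\big)\ge0$, and the second factor is nonnegative by Cauchy--Schwarz. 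When the pre-normalized vector has $\|\mathbf{u}\|<1$, I would instead use that the comparator can equivalently be taken over the unit ball. For a log-sum-exp in a linear map, the minimizer over the ball should be attained on the boundary, or one simply notes that radial projection onto the ball is nonexpansive. I will flag this as the step needing care. If it fails, the fallback is to state the result against the ball-constrained minimum, which is no larger and so only strengthens the bound. In any case I would obtain, column by column,
\[
\|\mathbf{W}^{(i)}-\mathbf{W}^\star\|_F^2\le\|\mathbf{W}^{(i-1)}-\mathbf{W}^\star\|_F^2-2\eta_i\langle\mathbf{G}_i,\mathbf{W}^{(i-1)}-\mathbf{W}^\star\rangle_F+\eta_i^2\epsilon^2 .
\]

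Rearranging gives
\[
\langle\mathbf{G}_i,\mathbf{W}^{(i-1)}-\mathbf{W}^\star\rangle_F\le\frac{\|\mathbf{W}^{(i-1)}-\mathbf{W}^\star\|_F^2-\|\mathbf{W}^{(i)}-\mathbf{W}^\star\|_F^2}{2\eta_i}+\frac{\eta_i\epsilon^2}{2}.
\]
I will sum over $i$ and apply Abel summation to the telescoping term. Because $1/\eta_i$ is nondecreasing and each squared distance is at most $D^2$, the telescoping sum is at most $D^2/(2\eta_N)=D^2\sqrt{N}/(2\rho)$. The step-size term is $\frac{\rho\epsilon^2}{2}\sum_{i\le N}i^{-1/2}\le\rho\epsilon^2\sqrt{N}$. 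Combining with convexity yields a regret of at most $\big(2M/\rho+\rho\epsilon^2\big)\sqrt{N}=\mathcal{O}(\sqrt{N})$. This constant depends only on $M,\rho,\epsilon$, which are the constants the statement hides.
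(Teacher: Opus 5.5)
\textbf{There is a genuine gap, and it sits exactly at the step you flag. None of your proposed repairs closes it.}

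Your lemma for $\|\mathbf{u}\|_2\ge 1$ is correct, but Eq.~(\ref{eq8}) does not keep you in that regime. For a unit column $\mathbf{w}$ with gradient block $\mathbf{g}$,
\[
\|\mathbf{w}-\eta\mathbf{g}\|_2^2=1-2\eta\,\mathbf{w}\cdot\mathbf{g}+\eta^2\|\mathbf{g}\|_2^2,
\]
which is $<1$ whenever $\mathbf{w}\cdot\mathbf{g}>0$ and $\eta$ is small. In that regime your own identity gives $\|\mathbf{u}/t-\mathbf{w}^\star\|_2^2-\|\mathbf{u}-\mathbf{w}^\star\|_2^2=(1-t)(1+t-2c)$, with $t=\|\mathbf{u}\|_2$ and $c=\mathbf{u}\cdot\mathbf{w}^\star/t$. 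This is positive unless $c\ge(1+t)/2$. It is also first order in $\eta$, since $1-t\approx\eta\,\mathbf{w}\cdot\mathbf{g}$. After dividing by $2\eta_i$ it can leave an $\Omega(1)$ term per round, so the telescoping bound degrades to $\mathcal{O}(N)$.

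The unit-ball detour does not help, for three reasons:
\begin{enumerate}
\item Nonexpansiveness of the ball projection concerns a map that leaves $\mathbf{u}$ unchanged when $\|\mathbf{u}\|_2<1$. Eq.~(\ref{eq8}) instead outputs $\mathbf{u}/\|\mathbf{u}\|_2$.
\item Whether the ball minimizer lies on the boundary is beside the point, because the inequality already fails for unit comparators.
\item Measuring regret against the smaller ball minimum is a stronger claim, hence harder, not a fallback.
\end{enumerate}

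\textbf{The gap cannot be closed, because the statement is false under its hypotheses.} Nothing in the statement, nor in the paper's argument, restricts $\mathbf{W}^{(0)}$ beyond $\mathbf{W}^{(0)}\in\mathcal{K}$. Take:
\begin{itemize}
\item $M=2$;
\item every $\mathbf{z}_i=\mathbf{z}$, with pseudo-labels $P(\hat y_1|\mathbf{x}_i;\hat{\mathcal{Y}})=P(\hat y_2|\mathbf{x}_i;\hat{\mathcal{Y}})=\tfrac12$;
\item $\mathbf{W}^{(0)}=[\mathbf{z},-\mathbf{z}]$ and $\rho<2\kappa$.
\end{itemize}
The gradient blocks are $\pm a\mathbf{z}$ with $a=\tanh(1/\kappa)/(2\kappa)$. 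So the pre-normalized columns are $\pm(1-\eta_i a)\mathbf{z}$ with $0<1-\eta_i a<1$, and normalization returns $\mathbf{W}^{(0)}$ at every round. Each round therefore costs $\log(2\cosh(1/\kappa))$, while any $\mathbf{W}$ with $\mathbf{w}_1=\mathbf{w}_2$ attains $\log 2$. The regret is $N\log\cosh(1/\kappa)$, linear in $N$, even though $\|\nabla_{\mathbf{W}}\hat\ell\|_F\le\sqrt2/\kappa$ everywhere.

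\textbf{Relation to the paper's proof.} The paper's proof uses the same Zinkevich template as yours, with the same constant $(2M/\rho+\rho\epsilon^2)\sqrt N$. However, it simply asserts that projection onto the nonconvex set $\mathcal{K}=(\mathbb{S}^{d-1})^M$ is nonexpansive, which is precisely the false step. You located the real flaw but did not, and could not, repair it. A correct statement requires changing the feasible set, e.g.\ projecting each column onto the unit ball (a convex set). After that change your argument, and the paper's, goes through verbatim with $D^2=4M$.
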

\begin{proof}
See Appendix~\ref{Appendix C} for the proof
\end{proof}
\subsection{Online Prototype Learning for OOD Detection}
To realize the idea, we, motivated by NegLabel~\cite{28}, begin by extending the ID label space to include $L$ additional placeholders. This is essentially equivalent to grouping potential target OOD data into $L$ OOD pseudo-classes (clusters). For clarity, we denote the corresponding set of OOD pseudo-labels as $\hat{\mathcal{Y}}_{\mathrm{o}}=\left\{\hat{y}_1,\ldots,\hat{y}_L\right\}$.

Due to the lack of explicit OOD information, NegLabel uses text features of negative labels as OOD pseudo-class prototypes such that $\hat{\mathcal{Y}}_{\mathrm{o}}=\mathcal{Y}_{\text{neg}}$. However, this strategy suffers not only from the intra-modality discrepancy between negative labels and the true underlying OOD categories, but also from the inter-modality gap between text-derived prototypes and optimal visual prototypes, as discussed in Section~\ref{motivation}.

In contrast, as we show later, we directly learn the prototypes of both ID classes and OOD pseudo-classes in the visual feature space via the proposed online optimization framework introduced in Section~\ref{framework}.

At high level, our key idea is to update ID class prototypes if an incoming test sample is classified as ID, and to update OOD pseudo-class prototypes otherwise. Since no human annotations indicating ID or OOD are available at test time, we introduce a hard thresholding rule based on the off-the-shelf score $S_{\text{NegLabel}}(\mathbf{x}_i;f)$ in Eq. (\ref{NegLabel}) to identify positive and negative samples from test data streams:
\begin{equation}
\label{eq10}
\begin{split}
   \text{Positive}: S_{\text{NegLabel}}(\mathbf{x}_i;f)&\geq\beta, \\
   \text{Negative}: S_{\text{NegLabel}}(\mathbf{x}_i;f)&\leq1-\beta,
\end{split}
\end{equation}
where $\beta\in(0.5,1]$. We note that, with Eq. (\ref{eq10}), test samples whose scores fall within $(1-\beta,\beta)$ are treated as uncertain and are not used to update any prototypes.

Given the $i$-th test sample $\mathbf{x}_i$ ($i=1,2,\ldots$) to arrive, we denote by $c_i^+$ and $c_i^-$ to represent the cumulative number of positive and negative samples we have ever seen so far respectively such that
\begin{equation}
\label{eq11}
\begin{split}
c_i^+ &=
c_{i-1}^+ + \mathds{1}\left(S_{\text{NegLabel}}(\mathbf{x}_i;f)\geq\beta\right),\\
c_i^- &=c_{i-1}^- + \mathds{1}\left(S_{\text{NegLabel}}(\mathbf{x}_i;f)\le 1-\beta\right),
\end{split}
\end{equation}
with $c_0^+=c_0^-=0$ and $\mathds{1}(\cdot)$ as an indicator function.

For each $y\in\textcolor{red}{\mathcal{Y}_{\mathrm{I}}}$, the ID class prototypes $\mathbf{w}_{y}$ is updated as
\begin{equation}
\label{eq12}
\mathbf{w}_{y}^{(i)}=
\begin{cases}
  \mathbf{w}_{y}^{(i-1)} \quad \text{ if } S_{\text{NegLabel}}(\mathbf{x}_i;f)<\beta,\\
  \ell_2\left(\mathbf{w}_{y}^{(i-1)}-\textcolor{red}{\eta^+_i} \frac{\partial \hat{\ell}(\mathbf{W}^{(i-1)};\mathbf{x}_i,\textcolor{red}{\mathcal{Y}_{\mathrm{I}}})}{\partial \mathbf{w}_{y}^{(i-1)}}\right) \text{ otherwise},
\end{cases}
\end{equation}
where the step size is $\eta_i^+=\rho/\sqrt{c_i^+}$.

Similarly, for each $y\in\textcolor{orange}{\hat{\mathcal{Y}}_{\mathrm{o}}}$, we can update the OOD pseudo-class prototype $\mathbf{w}_{y}$ as
\begin{equation}
\label{eq13}
\mathbf{w}_{y}^{(i)}=
\begin{cases}
  \mathbf{w}_{y}^{(i-1)} \quad \text{ if } S_{\text{NegLabel}}(\mathbf{x}_i;f)>1-\beta,\\
  \ell_2\left(\mathbf{w}_{y}^{(i-1)}-\textcolor{orange}{\eta^-_i} \frac{\partial \hat{\ell}(\mathbf{W}^{(i-1)};\mathbf{x}_i,\textcolor{orange}{\hat{\mathcal{Y}}_{\mathrm{o}}})}{\partial \mathbf{w}_{y}^{(i-1)}}\right) \text{ otherwise}.
\end{cases}
\end{equation}
where the step size is $\eta_i^-=\rho/\sqrt{c_i^-}$.

Based on the updated prototypes, we define the online OOD scoring function for $\mathbf{x}_i$ as
\begin{equation}
\label{eq14}
S_{\text{ours}}(\mathbf{x}_i;f)= \frac{\sum_{y\in\mathcal{Y}_{\mathrm{I}}}\exp(\mathbf{z}_i\cdot\mathbf{w}^{(i)}_{y}/\tau)}{\sum_{y\in\mathcal{Y}_{\mathrm{I}}\cup\hat{\mathcal{Y}}_{\mathrm{o}}} \exp(\mathbf{z}_i\cdot\mathbf{w}^{(i)}_{y}/\tau)}.
\end{equation}
where the prototypes are initialized as
\begin{equation}
\label{eq15}
\begin{split}
\mathbf{w}^{(0)}_{y_i}&=f_{\mathcal{T}}\big(\mathcal{P}(y_i)\big),\quad \forall i=1,\ldots,K,\\
\mathbf{w}^{(0)}_{\hat{y}_i}&=f_{\mathcal{T}}\big(\mathcal{P}(y_{k+i})\big),\quad \forall i=1,\ldots,L.
\end{split}
\end{equation}
For clarity, we summarize the complete algorithm in Algorithm~\ref{alg1}.
Finally, we note that the proposed framework can also be applied to the ID-label-only case like MCM; detailed discussions are deferred to Appendix~\ref{appendix D}.

\begin{algorithm}[t]
\caption{Online Prototype Learning for OOD Detection}
\begin{algorithmic}[1]
\STATE {\bf Input:} Pre-trained CLIP-based model $f$, Prompt template $\mathcal{P}(\cdot)$, ID labels $\mathcal{Y}_{\mathrm{I}}=\left\{y_1,\ldots,y_K\right\}$, OOD pseudo-labels $\hat{\mathcal{Y}}_{\mathrm{o}}=\left\{\hat{y}_1,\ldots,\hat{y}_L\right\}$.
\STATE Initialize $c_0^+=c_0^-=0$
\STATE Initialize ID and OOD class prototypes via Eq. (\ref{eq15})
\WHILE{the $i$-th test sample $\textbf{x}_i$ arrives}
\STATE Update $c_i^+$ and $c_i^-$ via Eq. (\ref{eq11})
\STATE Update $\mathbf{w}_y^{(i)}$ via Eq. (\ref{eq12}), $\forall y\in\mathcal{Y}_{\mathrm{I}}$
\STATE Update $\mathbf{w}_y^{(i)}$ via Eq. (\ref{eq13}), $\forall y\in\hat{\mathcal{Y}}_{\mathrm{o}}$
\STATE Compute $S_{\text{ours}}(\mathbf{x}_i;f)$ via Eq. (\ref{eq14})
\ENDWHILE
\end{algorithmic}
\label{alg1}
\end{algorithm}

\begin{table*}[tb]
\centering
\caption{OOD detection results on ImageNet-1K, where a VIT B/16 CLIP encoder is adopted. $\uparrow$ indicates larger values are better and vice versa. The best results in the last two columns are shown in bold. Full results of our method are provided in Table~\ref{seed}. \textdagger: \textbf{It can be checked that the final score for OOD detection linearly combines the AdaNeg score and the NegLabel score.}} 
\label{imagenet}
\resizebox{1.0 \textwidth}{!}{%
\begin{tabular}{l|cc|cc|cc|cc|cc}
\toprule
Dataset & \multicolumn{2}{c|}{iNaturalist} & \multicolumn{2}{c|}{SUN} & \multicolumn{2}{c|}{Places} & \multicolumn{2}{c|}{Textures} & \multicolumn{2}{c}{Average} \\ 
\midrule
Metric &  AUROC$\uparrow$ &  FPR95$\downarrow$&  AUROC$\uparrow$ &  FPR95$\downarrow$&  AUROC$\uparrow$ &  FPR95$\downarrow$&  AUROC$\uparrow$ &  FPR95$\downarrow$ &  AUROC$\uparrow$ &  FPR95$\downarrow$        \\
 \midrule
\rowcolor{gray!40} \multicolumn{11}{c}{\textbf{Methods requiring training (or fine-tuning)}} \\
MSP &    87.44 & 58.36 & 79.73 & 73.72 & 79.67 & 74.41 & 79.69 & 71.93 & 81.63 & 69.61   \\
ODIN &    94.65 & 30.22 & 87.17 & 54.04 & 85.54 & 55.06 & 87.85 & 51.67 & 88.80 & 47.75   \\
Energy &    95.33 & 26.12 & 92.66 & 35.97 & 91.41 & 39.87 & 86.76 & 57.61 & 91.54 & 39.89 \\
GradNorm &   72.56 & 81.50 & 72.86 & 82.00 & 73.70 & 80.41 & 70.26 & 79.36 & 72.35 & 80.82 \\
ViM  &    93.16 & 32.19 & 87.19 & 54.01 & 83.75 & 60.67 & 87.18 & 53.94 & 87.82 & 50.20 \\
KNN  &    94.52 & 29.17 & 92.67 & 35.62 & 91.02 & 39.61 & 85.67 & 64.35 & 90.97 & 42.19 \\
VOS  &    94.62 & 28.99 & 92.57 & 36.88 & 91.23 & 38.39 & 86.33 & 61.02 & 91.19 & 41.32 \\
NPOS  &    96.19 & 16.58 & 90.44 & 43.77 & 89.44 & 45.27 & 88.80 & 46.12 & 91.22 & 37.93\\
LSN   & 95.83 & 21.56 & 94.35 & 26.32 & 91.25 & 34.48 & 90.42 & 38.54 & 92.96 & 30.22 \\
CLIPN & 95.27 & 23.94 & 93.93 & 26.17 & 92.28 & 33.45 & 90.93 & 40.83 & 93.10 & 31.10 \\
LoCoOp & 96.86 & 16.05 & 95.07 & 23.44 & 91.98 & 32.87 & 90.19 & 42.28 & 93.52 & 28.66 \\
LAPT & 99.63 & 1.16 & 96.01 & 19.12 & 92.01 & 33.01 & 91.06 & 40.32 & 94.68 & 23.40 \\
NegPrompt & 98.73 & 6.32 & 95.55 & 22.89 & 93.34 & 27.60 & 91.60 & 35.21 & 94.81 & 23.01 \\
\midrule
\rowcolor{gray!40}  \multicolumn{11}{c}{\textbf{Zero-Shot Training-free Methods}} \\
Mahalanobis & 55.89 & 99.33 & 59.94 & 99.41 & 65.96 & 98.54 & 64.23 & 98.46 & 61.50 & 98.94 \\
Energy          & 85.09 & 81.08 & 84.24 & 79.02 & 83.38 & 75.08 & 65.56 & 93.65 & 79.57 & 82.21 \\
ZOC   & 86.09 & 87.30 & 81.20 & 81.51 & 83.39 & 73.06 & 76.46 & 98.90 & 81.79 & 85.19 \\
MCM & 94.59 & 32.20 & 92.25 & 38.80 & 90.31 & 46.20 & 86.12 & 58.50 & 90.82 & 43.93 \\
GL-MCM & 96.71 & 15.16 & 93.41 & 29.16 & 90.37 & 37.07 & 83.11 & 55.85 & 90.90 & 35.06 \\
NegLabel & 99.49 & 1.91 & 95.49 & 20.53 & 91.64 & 35.59 & 90.22 & 43.56 & 94.21 & 25.40 \\
{DNM} & {99.51} & {1.78} & {95.88} & {16.90} & {91.95} & {32.13} & {90.72} & {38.67} & {94.52} & {22.33} \\
InfoOOD &99.70 &1.04 &96.16 &16.06 &93.37 &26.92 &91.01 &40.78 &{95.07} &{21.20}\\
{CSP} &99.60 &1.54 &96.66 &13.66 &92.90 &29.32 &93.86 &25.52  &95.76 &17.51 \\
{AdaNeg+NegLabel}\textsuperscript{\textdagger} & {99.71} & {0.59} & {97.44} & {9.50} & {94.55} & {34.34} & {94.93} & {31.27} & {96.66} & {18.92} \\
\rowcolor{LightCyan} Ours (Mean) & 99.88 & 0.50 & 98.94 & 5.00 & 95.17 & 28.55 & 97.01 & 17.12 & \textbf{97.75} & \textbf{12.79}\\
\rowcolor{LightCyan} Ours (Std) & 0.04 & 0.03 & 0.04 & 0.27 & 0.16 & 0.59 & 0.27 & 0.77 & 0.07 & 0.31\\
\bottomrule
\end{tabular}
}
\end{table*}

\subsection{Discussion}
We recently found that AdaNeg~\cite{40} also performs online updates of class prototypes using unlabeled test data streams. However, our method fundamentally differs from AdaNeg in problem formulation. AdaNeg is motivated by an empirical issue, i.e., negative text labels can be semantically misaligned with the target OOD distribution, and addresses it by constructing adaptive negative prototypes from cached test-image features. While effective, this strategy remains a form of prototype engineering within the standard CLIP paradigm: it insists on treating text embeddings as valid anchors and focuses on refining negative prototypes, rather than questioning the validity of text-derived prototypes more generally.

In contrast, our work targets a structural limitation in CLIP-based post-hoc OOD detection: directly using text embeddings as class prototypes is theoretically unjustified due to an intrinsic modality gap between text-derived prototypes and the optimal visual prototypes. This failure mode is independent of negative-label mining or memory heuristics and therefore cannot be addressed by negative-prototype adaptation alone. We instead replace text-as-prototype with a principled online pseudo-supervised optimization that calibrates prototypes in the visual feature space from unlabeled test streams and comes with convergence guarantees, yielding a more general and theoretically grounded test-time prototype calibration framework. Empirically, we show in Section~\ref{experiment} that these theoretical advantages translate into consistent performance gains across standard benchmarks.

\section{Experiments}
\label{experiment}

\textbf{Baselines.} 
We compare our method with MSP~\cite{10}, ODIN~\cite{11}, Energy~\cite{22}, Gradnorm~\cite{13}, Vim~\cite{44}, KNN~\cite{14}, VOS~\cite{45}, NPOS~\cite{46}, ZOC~\cite{26}, CLIPN~\cite{47}, LoCoOp~\cite{48}, LSN~\cite{49}, LAPT~\cite{50}, NegPrompt~\cite{51}, Mahalanobis ~\cite{15}, MCM~\cite{27}, NegLabel~\cite{28}, GL-MCM~\cite{79}, DNM~\cite{80}, InfoOOD~\cite{78}, AdaNeg~\cite{40} and CSP~\cite{30}. 

\textbf{Implementation Details.}
Unless otherwise specified, we employ CLIP-B/16 for zero-shot OOD detection. Following prior works~\cite{28,40}, we adopt the text prompt of `The nice $<$label$>$.' and use the same $L=10000$ negative labels selected by \citet{28}. Regarding hyper-parameters, we fix $\tau=0.01$, $\kappa = 0.05$, $\rho = 0.1$ and $\beta = 0.95$ for all experiments. \textit{The reported results of our method are averaged over 10 independent runs.}

\subsection{Main Results}
\textbf{Evaluation on ImageNet Benchmark.}
Following prior work~\cite{27,28,40}, we evaluate our method on the popular ImageNet-1K benchmark~\cite{52} in Table~\ref{imagenet}, where the validation set of ImageNet-1K is designated as the ID dataset while iNaturalist~\cite{56}, SUN~\cite{55}, Places365~\cite{53}, and Textures~\cite{54} are considered as OOD datasets.
At test time, all images are resized to $224\times224$.
The baselines range from traditional fine-tuning methods and advanced post-hoc methods from pre-trained VLMs like CLIP. Our method consistently outperforms existing baselines, which highlights its superiority in the zero-shot setting. 
The significant improvements demonstrate the effectiveness of our method in zero-shot scenarios. Table~\ref{imagenet_openood} shows that the empirical advantages of our method over the state-of-the-art still hold under the OpenOOD setup~\cite{77}, where Near-OOD datasets are SSB-hard~\cite{81} and NINCO~\cite{82} while Far-OOD datasets are iNaturalist~\cite{56}, Textures~\cite{54}, and OpenImage-O~\cite{83}.

\begin{table}[tb]
\centering
\caption{OOD detection results on the OpenOOD benchmark, where ImageNet-1K is adopted as ID. $\uparrow$ indicates larger values are better and vice versa. The best results are shown in bold. Full results are provided in Table~\ref{detailed_imagenet}.} 
\label{imagenet_openood}
\resizebox{\columnwidth}{!}{%
\begin{tabular}{l|cc|cc}
\toprule
\multirow{2}{*}{Methods} & \multicolumn{2}{c|}{FPR95$\downarrow$} & \multicolumn{2}{c}{AUROC$\uparrow$}\\ 
\cline{2-5} & Near-OOD & Far-OOD & Near-OOD & Far-OOD\\
 \midrule
\rowcolor{gray!40} \multicolumn{5}{c}{\textbf{Methods requiring training (or fine-tuning)}} \\
GEN & -- & -- & 78.97 & 90.98\\
AugMix + ReAct & -- & -- & 79.94 & 93.70\\
RMDS & -- & -- & 80.09 & 92.60\\
SCALE & -- & -- & 81.36 & 96.53\\
AugMix + ASH & -- & -- & 82.16 & 96.05\\
LAPT & 58.94 & 24.86 & 82.63 & 94.26\\
\midrule
\rowcolor{gray!40}  \multicolumn{5}{c}{\textbf{Zero-Shot Training-free Methods}} \\
MCM & 79.02 & 68.54 & 60.11 & 84.77\\
NegLabel & 69.45 & 23.73 & 75.18 & 94.85\\
AdaNeg+Neglabel & 67.51 & 17.31 & 76.70 & 96.43\\
\rowcolor{LightCyan}
\textbf{Ours} & \textbf{52.86} & \textbf{13.22} & \textbf{85.80} & \textbf{97.67}\\  

\bottomrule
\end{tabular}
}
\end{table}

\begin{table}[tb]
\centering
\caption{OOD detection results on the OpenOOD benchmark, where CIFAR-10 is adopted as ID. $\uparrow$ indicates larger values are better and vice versa. The best results are shown in bold. Full results are provided in Table~\ref{detailed_c10}.} 
\label{c10_openood}
\resizebox{\columnwidth}{!}{%
\begin{tabular}{l|cc|cc}
\toprule
\multirow{2}{*}{Methods} & \multicolumn{2}{c|}{FPR95$\downarrow$} & \multicolumn{2}{c}{AUROC$\uparrow$}\\ 
\cline{2-5} & Near-OOD & Far-OOD & Near-OOD & Far-OOD\\
 \midrule
\rowcolor{gray!40} \multicolumn{5}{c}{\textbf{Methods requiring training (or fine-tuning)}} \\
PixMix + KNN & -- & -- & 93.10 & 95.94\\
OE + MSP & -- & -- & 94.82 & 96.00\\
PixMix + RotPred & -- & -- & 94.86 & 98.18\\
\midrule
\rowcolor{gray!40}  \multicolumn{5}{c}{\textbf{Zero-Shot Training-free Methods}} \\
MCM & 30.86 & 17.99 & 91.92 & 95.54\\
NegLabel & 28.75 & 6.60 & 94.58 & 98.39\\
AdaNeg+Neglabel & 20.40 & 2.79 & 94.78 & 99.26\\
\rowcolor{LightCyan}
\textbf{Ours} & \textbf{16.38} & \textbf{2.25} & \textbf{95.72} & \textbf{99.83}\\  

\bottomrule
\end{tabular}
}
\end{table}
\begin{table}[tb]
\centering
\caption{OOD detection results on the OpenOOD benchmark, where CIFAR-100 is adopted as ID. $\uparrow$ indicates larger values are better and vice versa. The best results are shown in bold. Full results are provided in Table~\ref{detailed_c100}.} 
\label{c100_openood}
\resizebox{\columnwidth}{!}{%
\begin{tabular}{l|cc|cc}
\toprule
\multirow{2}{*}{Metric} & \multicolumn{2}{c|}{FPR95$\downarrow$} & \multicolumn{2}{c}{AUROC$\uparrow$}\\ 
\cline{2-5} & Near-OOD & Far-OOD & Near-OOD & Far-OOD\\
 \midrule
\rowcolor{gray!40} \multicolumn{5}{c}{\textbf{Methods requiring training (or fine-tuning)}} \\
GEN & -- & -- & 81.31 & 79.68\\
VOS + EBO & -- & -- & 80.93 & 81.32\\
SCALE & -- & -- & 80.99 & 81.42\\
OE + MSP & -- & -- & 88.30 & 81.41\\
\midrule
\rowcolor{gray!40}  \multicolumn{5}{c}{\textbf{Zero-Shot Training-free Methods}} \\
MCM & 75.20 & 59.32 & 71.00 & 76.00 \\
NegLabel  & 71.44 & 40.92  & 70.58 &  89.68  \\
AdaNeg+Neglabel & 59.07 & {29.35} & {84.62} & {95.25} \\
\rowcolor{LightCyan}
\textbf{Ours} & \textbf{45.70} & \textbf{21.97} & \textbf{89.36} & \textbf{96.90}\\  

\bottomrule
\end{tabular}
}
\end{table}

\textbf{Evaluation on CIFAR Benchmarks}
We further conduct experiments on CIFAR benchmarks~\cite{72} under the OpenOOD setup. To be specific, with CIFAR-10/CIFAR-100 as ID dataset, Near-OOD datasets are CIFAR-100/CIFAR-10 and Tiny-ImageNet~\cite{73} while Far-OOD datasets are MNIST~\cite{75}, SVHN~\cite{74}, Texture~\cite{54} and Places~\cite{53}. Experiment results in Table~\ref{c10_openood} and Table~\ref{c100_openood} show that our method achieves consistent and notable improvements on both CIFAR-10 and CIFAR-20, demonstrating the generalization of our method.

\begin{figure*}[t]
    \centering
    \includegraphics[width=1\linewidth]{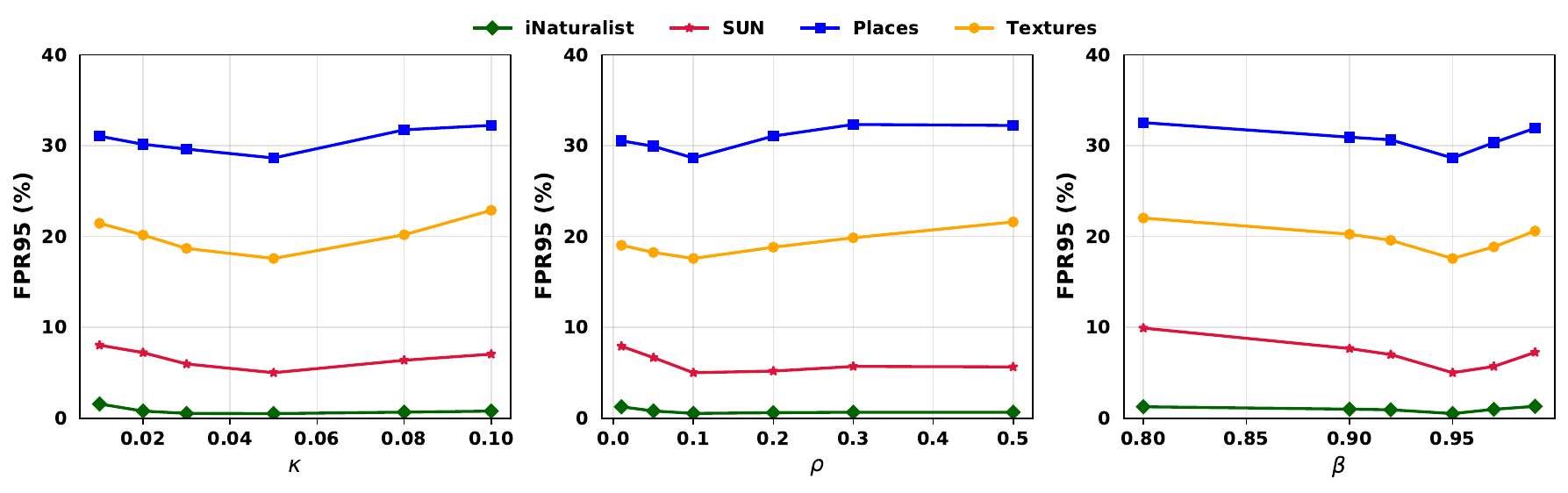}
    \caption{Ablation study on ImageNet-1k w.r.t. hyperparameter $\kappa$ (left), $\rho$ (middle) and $\beta$ (right).}
    \label{fig:ablation}
\end{figure*}

\begin{table*}[tb]
\centering
\caption{OOD detection results with different CLIP architectures on ImageNet-1K as ID. $\uparrow$ indicates larger values are better and vice versa. The best results in the last two columns are shown in bold.} 
\label{backbone}
\resizebox{1.0\textwidth}{!}{%
\begin{tabular}{l|l|cc|cc|cc|cc|cc}
\toprule
\multirow{2}{*}{Backbone} & \multirow{2}{*}{Method} & \multicolumn{2}{c|}{iNaturalist} & \multicolumn{2}{c|}{SUN} & \multicolumn{2}{c|}{Places} & \multicolumn{2}{c|}{Textures} & \multicolumn{2}{c}{Average}\\
\cline{3-12}
&  & AUROC$\uparrow$ & FPR95$\downarrow$ & AUROC$\uparrow$ & FPR95$\downarrow$ & AUROC$\uparrow$ & FPR95$\downarrow$ & AUROC$\uparrow$ & FPR95$\downarrow$ & AUROC$\uparrow$ & FPR95$\downarrow$ \\
\midrule
\multirow{3}{*}{ResNet50} 
& Neglabel & 99.24 & 2.88 & 94.54 & 26.51 & 89.72 & 42.60 & 88.40 & 50.80 & 92.97 & 30.70\\
& AdaNeg+Neglabel & 99.58 & 1.18 & 97.37 & 10.56 & 93.84 & 43.19 & 94.18 & 35.00 & 96.24 & 22.48 \\
& \cellcolor{LightCyan}Ours & \cellcolor{LightCyan}99.48 & \cellcolor{LightCyan}1.11 & \cellcolor{LightCyan}98.04 & \cellcolor{LightCyan}6.05 & \cellcolor{LightCyan}94.68 & \cellcolor{LightCyan}32.61 & \cellcolor{LightCyan}96.15 & \cellcolor{LightCyan}23.76 & \cellcolor{LightCyan}\textbf{97.09} & \cellcolor{LightCyan}\textbf{15.88}\\
\midrule
\multirow{3}{*}{ViT-B/32} 
& Neglabel & 99.11 & 3.72 & 95.27 & 22.48 & 91.72 & 34.94 & 88.57 & 50.51 & 93.67 & 27.92\\
& AdaNeg+Neglabel & 99.59 & 1.02 & 97.53 & 9.63 & 93.99 & 38.45 & 94.21 & 37.92 & 96.33 & 21.76\\
& \cellcolor{LightCyan}Ours & \cellcolor{LightCyan}99.51 & \cellcolor{LightCyan}0.84 & \cellcolor{LightCyan}98.83 & \cellcolor{LightCyan}5.33 & \cellcolor{LightCyan}94.68 & \cellcolor{LightCyan}31.73 & \cellcolor{LightCyan}96.08 & \cellcolor{LightCyan}22.54 & \cellcolor{LightCyan}\textbf{97.28} & \cellcolor{LightCyan}\textbf{15.11}\\
\midrule
\multirow{3}{*}{ViT-L/14} 
& Neglabel & 99.53 & 1.77 & 95.63 & 22.33 & 93.01 & 32.22 & 89.71 & 42.92 & 94.47 & 24.81\\
& AdaNeg+Neglabel & 99.74 & 0.56 & 97.79 & 9.34 & 94.96 & 34.07 & 95.01 & 30.06 & 96.87 & 18.50\\
& \cellcolor{LightCyan}Ours & \cellcolor{LightCyan}99.92 & \cellcolor{LightCyan}0.35 & \cellcolor{LightCyan}99.06 & \cellcolor{LightCyan}4.17 & \cellcolor{LightCyan}96.49 & \cellcolor{LightCyan}27.49 & \cellcolor{LightCyan}97.90 & \cellcolor{LightCyan}16.45 & \cellcolor{LightCyan}\textbf{98.34} & \cellcolor{LightCyan}\textbf{12.12}\\
\bottomrule
\end{tabular}
}
\end{table*}

\begin{table*}[tb]
\centering
\caption{OOD detection results with a larger input resolution on ImageNet-1k as ID, where a VIT L/14 CLIP encoder is adopted. $\uparrow$ indicates larger values are better and vice versa. The best results in the last two columns are shown in bold.}
\label{input}
\resizebox{\textwidth}{!}{%
\begin{tabular}{l|l|cc|cc|cc|cc|cc}
\toprule
\multirow{2}{*}{Resolution} & \multirow{2}{*}{Method} & \multicolumn{2}{c|}{iNaturalist} & \multicolumn{2}{c|}{SUN} & \multicolumn{2}{c|}{Places} & \multicolumn{2}{c|}{Textures} & \multicolumn{2}{c}{Average}\\
\cline{3-12}
&  & AUROC$\uparrow$ & FPR95$\downarrow$ & AUROC$\uparrow$ & FPR95$\downarrow$ & AUROC$\uparrow$ & FPR95$\downarrow$ & AUROC$\uparrow$ & FPR95$\downarrow$ & AUROC$\uparrow$ & FPR95$\downarrow$ \\
\midrule
\multirow{3}{*}{$336\times336$}
& Neglabel & 99.61 & 1.21 & 96.29 & 22.06 & 93.30 & 32.01 & 89.91 & 40.85 & 94.78 & 24.03\\
& AdaNeg+Neglabel & 99.81 & 0.46 & 98.35 & 6.66 & 93.15 & 33.03 & 96.17 & 27.61 & 96.84 & 16.94 \\
& \cellcolor{LightCyan}Ours & \cellcolor{LightCyan}99.94 & \cellcolor{LightCyan}0.22 & \cellcolor{LightCyan}98.99 & \cellcolor{LightCyan}4.02 & \cellcolor{LightCyan}96.85 & \cellcolor{LightCyan}24.59 & \cellcolor{LightCyan}98.66 & \cellcolor{LightCyan}16.01 & \cellcolor{LightCyan}\textbf{98.61} & \cellcolor{LightCyan}\textbf{11.21} \\
\bottomrule
\end{tabular}
}
\end{table*}

\begin{table*}[tb]
\centering
\caption{Domain-generalizable OOD detection results on ImageNet-1K variants as ID. $\uparrow$ indicates larger values are better and vice versa. The best results in the last two columns are shown in bold.} 
\label{domain_shift}
\resizebox{1.0\textwidth}{!}{%
\begin{tabular}{l|l|cc|cc|cc|cc|cc}
\toprule
\multirow{2}{*}{ID Dataset} & \multirow{2}{*}{Method} & \multicolumn{2}{c|}{iNaturalist} & \multicolumn{2}{c|}{SUN} & \multicolumn{2}{c|}{Places} & \multicolumn{2}{c|}{Textures} & \multicolumn{2}{c}{Average}\\
\cline{3-12}
&  & AUROC$\uparrow$ & FPR95$\downarrow$ & AUROC$\uparrow$ & FPR95$\downarrow$ & AUROC$\uparrow$ & FPR95$\downarrow$ & AUROC$\uparrow$ & FPR95$\downarrow$ & AUROC$\uparrow$ & FPR95$\downarrow$ \\
\midrule
\multirow{3}{*}{ImageNet-A} 
& Neglabel & 98.80 & 4.09 & 89.83 & 44.38 & 82.88 & 60.10 & 80.25 & 64.34 & 87.94 & 43.23\\
& AdaNeg+Neglabel & 98.99 & 3.24 & 93.26 & 32.64 & 84.02 & 60.95 & 90.94 & 44.40 & 91.80 & 35.30\\
& \cellcolor{LightCyan}Ours & \cellcolor{LightCyan}99.18 & \cellcolor{LightCyan}3.33 & \cellcolor{LightCyan}97.25 & \cellcolor{LightCyan}13.91 & \cellcolor{LightCyan}92.13 & \cellcolor{LightCyan}40.49 & \cellcolor{LightCyan}96.20 & \cellcolor{LightCyan}26.24 & \cellcolor{LightCyan}\textbf{96.19} & \cellcolor{LightCyan}\textbf{20.99}\\
\midrule
\multirow{3}{*}{ImageNet-S} 
& Neglabel & 99.34 & 2.24 & 94.93 & 22.73 & 90.78 & 38.62 & 89.29 & 46.10 & 93.59 & 27.42\\
& AdaNeg+Neglabel & 99.73 & 0.74 & 98.23 & 7.15 & 95.25 & 25.41 & 96.61 & 17.73 & 97.46 & 12.76\\
& \cellcolor{LightCyan}Ours & \cellcolor{LightCyan}99.95 & \cellcolor{LightCyan}0.19 & \cellcolor{LightCyan}99.37 & \cellcolor{LightCyan}5.03 & \cellcolor{LightCyan}96.04 & \cellcolor{LightCyan}23.71 & \cellcolor{LightCyan}98.22 & \cellcolor{LightCyan}9.94 & \cellcolor{LightCyan}\textbf{98.39} & \cellcolor{LightCyan}\textbf{10.72}\\
\midrule
\multirow{3}{*}{ImageNet-R} 
& Neglabel & 99.58 & 1.60 & 96.03 & 15.77 & 91.97 & 29.48 & 90.60 & 35.67 & 94.54 & 20.63\\
& AdaNeg+Neglabel & 99.72 & 0.89 & 99.03 & 2.14 & 97.33 & 13.78 & 97.09 & 15.68 & 98.29 & 8.12\\
& \cellcolor{LightCyan}Ours & \cellcolor{LightCyan}99.73 & \cellcolor{LightCyan}1.12 & \cellcolor{LightCyan}99.32 & \cellcolor{LightCyan}2.31 & \cellcolor{LightCyan}97.65 & \cellcolor{LightCyan}10.61 & \cellcolor{LightCyan}98.70 & \cellcolor{LightCyan}5.59 & \cellcolor{LightCyan}\textbf{98.85} & \cellcolor{LightCyan}\textbf{4.91}\\
\bottomrule
\end{tabular}
}
\end{table*}

\begin{table}[tb]
\centering
\caption{OOD detection results on ImageNet-1k with prompts learned by LAPT~\cite{50}. Following AdaNeg~\cite{40}, the performance is measured by FPR95$\downarrow$. The best results are shown in bold.} 
\label{prompt}
\resizebox{\columnwidth}{!}{%
\begin{tabular}{l|ccccc}
\toprule
{Methods} & iNaturalist & SUN & Places & Textures & Average\\
\midrule
NegLabel & 1.10 & 20.59 & 35.38 & 40.11 & 24.29\\
AdaNeg+NegLabel & 0.58 & 9.98 & 30.47 & 25.25 & 16.32\\
\rowcolor{LightCyan}
Ours & 0.31 & 4.41 & 22.32 & 13.60 & \textbf{10.16} \\
\bottomrule
\end{tabular}
}
\end{table}

\subsection{Ablation Study}
\textbf{Hyper-parameters.}
We evaluate the hyper-parameters most essential to our algorithm design, including $\kappa$ in Eq. (\ref{eq7}), $\beta$ in Eq. (\ref{eq10}), and $\rho$ in Eq. (\ref{eq12}) and Eq. (\ref{eq13}). 
The corresponding results are displayed in Figure~\ref{fig:ablation}, where one can find that our method is not sensitive to hyper-parameters.

\textbf{Visual Encoders.}
In principle, our method is generic to the choice of visual encoder. 
We evaluate our method with different visual encoder architectures, including ResNet-50, ViT-B/32 and ViT-L/14, and report the corresponding OOD detection results in Table~\ref{backbone}. 
It can be seen that the performance can be improved by more powerful visual encoders.
While our method consistently outperforms the most recent AdaNeg regardless of the backbone architecture used, indicating better generalization.  

\textbf{Input Resolution.}
In principle, our method is generic to the input resolution. We evaluate our method with a larger input size, i.e., 336$\times$336, and report the corresponding OOD detection results in Table~\ref{input}. On the one hand, the performance of OOD detection can be enhanced by a larger input size. On the other hand, our method consistently outperforms the most recent AdaNeg+NegLabel regardless of input resolution, which implies the better generalization of our method.

\textit{Due to space limitation, more ablation studies can be found in Appendix~\ref{Appendix E} while analysis on time and memory complexity can be found in Appendix~\ref{Appendix F}}

\subsection{Extensions}
\textbf{Domain-generalizable OOD Detection.}
We investigate domain generalizable OOD detection scenarios, where there exist domain shifts in ID data. With ImageNet-1K as a case study, we, following~\citet{28}, consider ImageNet-A~\cite{69}, ImageNet-R~\cite{71} and ImageNet-S~\cite{76} as ID data respectively. The experiment results on four OOD datasets are shown in Table~\ref{domain_shift}.
Our method consistently achieves the state-of-the-art performance, which implies the robustness of our method to domain shift.

\textbf{OOD Detection with Learned Prompt.}
While this paper, following~\cite{28}, to use a pre-defined prompts for both negative and ID labels, we show in Table~\ref{prompt} that our method can be made stronger with learned prompts by LAPT~\cite{50}.

\section{Related Work}
\subsection{Traditional Out-of-distribution Detection} 
The popularity of OOD detection is motivated by the empirical observation~\cite{12} that neural networks tend to be over-confident in OOD data.
One line of work performs OOD detection by devising post-hoc scoring functions, including confidence-based methods~\cite{23,27,24}, energy-based methods~\cite{22,21}, distance-based approaches~\cite{18,15,14,19,25,16,20}, gradient-based approaches~\cite{13, liao2026beliefmemoryagentmemory}, and Bayesian approaches~\cite{57,58}.
Another line of work addresses OOD detection by fine-tuning a pre-trained discrimination model with training-time regularizations that help the model learn ID/OOD discrepancy following the guideline of outlier exposure~\cite{59}.
For instance, the discriminative model is regularized to produce lower confidence~\cite{60,62,wang2023out}, smaller feature magnitudes~\cite{22} or higher energy~\cite{61} for outlier points. More recently, some works have considered a practical scenario where the auxiliary outliers can be arbitrarily different from the real OOD data, therefore distributionally augmenting the observed OOD data. Besides, the given OOD samples tend to include unlabelled ID counterparts~\cite{63}. Due to this, WOOD~\cite{63} formulates learning with noisy OOD samples as a constrained optimization problem while SAL~\cite{64} separates candidate outliers from the unlabeled wild data and then trains a binary classifier using the candidate outliers and the labelled ID data.


\subsection{CLIP-based Out-of-distribution Detection}

The core of CLIP-based OOD detection lies in how to leverage textual supervision with pre-trained VLMs to assist OOD detection on the visual domain.
On the one hand, the pioneering work, MCM~\cite{27}, defines textual features as concept proto-types for each ID class and uses the scaled distance between visual features and the closest ID prototype to measure OOD uncertainty. Instead of relying  on textual information from only ID label space, ZOC \cite{26} applies VLMs to discern OOD instances by training a captioner that generates potential OOD labels. Nevertheless, this captioner often fails to produce effective OOD labels, particularly for ID datasets containing many classes. Differently, NegLabel~\cite{28} incorporates additional negative class names mined from available data sources as negative proxies. Considering the nonalignment between target visual OOD distribution and the generated negative textual OOD distribution, AdaNeg~\cite{40} leverages the benefits of test-time adaptation to generate adaptive proxies by exploring potential OOD images during testing. More recently,~\citet{78} understand CLIP-based post-hoc OOD detection from an information-theoretical perspective while~\citet{80} improve negative mining strategy in CLIP-based post-hoc OOD detection from a positive-unlabeled perspective. 
On the other hand, CLIP-based OOD detection can also be improved by prompt representation learning. In particular, LoCoOp~\cite{48} learns ID text prompts by pushing them away from the portions of CLIP local features that have ID-irrelevant nuisances (e.g., backgrounds). CLIPN~\cite{47} and LSN~\cite{49} design a learnable “no” prompt and a “no” text encoder to capture negation semantics within images. Differently, LAPT~\cite{50} initializes prompts with negative labels, followed by tuning prompts with cross-modal and cross-distribution mixing.

\section{Conclusion}
This paper theoretically shows that a key assumption behind zero-shot, post-hoc OOD detection with pre-trained vision–language models—that text embeddings of class names can serve as reliable visual class prototypes—is generally invalid due to an intrinsic modality gap between textual prototypes and optimal visual prototypes in the image feature space. Building on this theoretical insight, we propose an online pseudo-supervised framework that respects the post-hoc constraint by learning and updating prototypes directly in the visual feature space from unlabeled test-time streams and soft CLIP predictions, with accompanying convergence guarantees. Extensive experiments across standard benchmarks demonstrate that this modality-aware prototype calibration consistently improves OOD detection performance and achieves state-of-the-art results.
\section*{Impact Statement}
This paper presents work aimed at advancing the field of machine learning. While our research may have a range of potential societal implications, we do not believe that any require specific discussion at this time.
\bibliography{example_paper}
\bibliographystyle{icml2026}

\newpage
\appendix
\onecolumn
\section{More Related Work}
Pretraining on large-scale image–text pairs has established vision–language models (VLMs) as a standard backbone for multimodal transfer. Architecturally, existing VLMs can broadly be grouped into two categories: (i) \textit{single-stream} models, which feed concatenated visual and textual features into a unified transformer, such as VisualBERT~\citep{visualbert} and ViLT~\citep{vilt}; and (ii) \textit{dual-stream} models, which maintain separate visual and textual encoders and learn cross-modal alignment through contrastive image–text pairing, such as CLIP~\citep{17}, ALIGN~\citep{align}, SigLIP~\citep{sigmoid}, and FILIP~\citep{filip}. Among these, CLIP-based models have been widely adopted and have inspired numerous follow-up studies aimed at improving data efficiency and downstream adaptation~\citep{yu2026enhancing, yu2020multi, yu2025learning, yu2026drift, yu2026generalized, liao2026explainable, gradnorm, ntksc}. In this paper, we adopt CLIP as the pretrained backbone; however, our method is generally applicable to contrastive vision–language models that learn aligned visual and textual representations.

\section{Proof of Theorem~\ref{them1}}
\label{Appendix A}

\begin{proof}
\textbf{Step 1: Euclidean gradient.}
Rewrite the loss:
\begin{equation}
\mathcal{L}(\textbf{W})
=
-\sum_{i=1}^N\left(
\frac{1}{\kappa} \mathbf{z}_i\cdot \mathbf{w}_{\hat{y}_i}
-\log\sum_{\hat y\in\hat Y}\exp\!\left(\mathbf{z}_i\cdot \mathbf{w}_{\hat y}/\kappa\right)
\right).
\end{equation}
Differentiating $\mathcal{L}(\textbf{W})$ with respect to $\mathbf{w}_{\hat y}$ yields:
\begin{equation}
\nabla_{\mathbf{w}_{\hat y}}\mathcal{L}(\textbf{W})
=
\frac{1}{\kappa}\left(\underbrace{\sum_{(\mathbf{x}_i,\hat{y}_i)\in D_{\hat y}} (\pi_{i\hat y}-1) \mathbf{z}_i
+\sum_{(\mathbf{x}_i,\hat{y}_i)\notin D_{\hat y}} \pi_{i\hat y} \mathbf{z}_i}_{-v_{\hat y}(\mathbf{W})}
\right),
\end{equation}
where
\begin{equation}
\pi_{i\hat y}
=
\frac{\exp\!\left(\mathbf{z}_i\cdot \mathbf{w}_{\hat y}/\kappa\right)}
{\sum_{y\in\hat Y}\exp\!\left(\mathbf{z}_i\cdot \mathbf{w}_{y}/\kappa\right)}.
\end{equation}

\textbf{Step 2: Lagrangian and KKT stationarity on the sphere.}
Introduce Lagrange multipliers $\mathbf{u}=\{\mu_{\hat y}\}_{\hat y\in\hat{\mathcal Y}}$ for the equality constraints
$\|\mathbf{w}_{\hat y}\|_2^2 = 1$ and form the Lagrangian
\begin{equation}
\label{eq:lagrangian}
\mathcal{L}(\mathbf{W},\mathbf{u})\triangleq \mathcal{L}(\textbf{W}) + \sum_{\hat y\in\hat{\mathcal Y}} \mu_{\hat y}\big(\|\mathbf{w}_{\hat y}\|_2^2-1\big).
\end{equation}
At a constrained optimum $(\mathbf{W}^*,\mathbf{u}^*)$, KKT stationarity requires, for each $\hat y\in\hat{\mathcal Y}$,
\begin{equation}
\label{eq:kkt_stationarity}
\nabla_{\mathbf{w}_{\hat y}} \mathcal{L}(\mathbf{W}^*,\mathbf{u}^*)
= \nabla_{\mathbf{w}_{\hat y}}\mathcal{L}(\textbf{W}) + 2\mu^*_{\hat y} \mathbf{w}^*_{\hat y}
=-\frac{1}{\kappa} v_{\hat y}(W^*) + 2\mu^*_{\hat y} \mathbf{w}^*_{\hat y} = 0,
\end{equation}
such that
\begin{equation}
\label{eq:parallel_condition}
-\frac{1}{\kappa} v_{\hat y}(\mathbf{W}^*) + 2\mu^*_{\hat y} \mathbf{w}^*_{\hat y} = 0
\quad\Longrightarrow\quad
v_{\hat y}(\mathbf{W}^*) = 2\kappa \mu^*_{\hat y}\, \mathbf{w}^*_{\hat y}.
\end{equation}
Hence, $w^*_{\hat y}$ must be \emph{parallel} to $v_{\hat y}(W^*)$.

\textbf{Step 3: Enforce unit norm and obtain the closed form.}
If $v_{\hat y}(\mathbf{W}^*)\neq 0$, then \eqref{eq:parallel_condition} implies that there exists a scalar $\lambda_{\hat y}\neq 0$ such that
\begin{equation}
\label{eq:w_lambda_v}
\mathbf{w}^*_{\hat y} = \lambda_{\hat y}\, v_{\hat y}(\mathbf{W}^*).
\end{equation}
Imposing $\|\mathbf{w}^*_{\hat y}\|_2=1$ gives
\begin{equation}
\label{eq:lambda_norm}
1 = \|\mathbf{w}^*_{\hat y}\|_2 = |\lambda_{\hat y}|\, \|v_{\hat y}(\mathbf{W}^*)\|_2
\quad\Longrightarrow\quad
|\lambda_{\hat y}| = \frac{1}{\|v_{\hat y}(\mathbf{W}^*)\|_2}.
\end{equation}
Absorbing the sign into $\mu^*_{\hat y}$ (equivalently choosing the representative on the ray), we obtain
\begin{equation}
\label{eq:theorem1_solution}
\mathbf{w}^*_{\hat y}
= \frac{v_{\hat y}(\mathbf{W}^*)}{\|v_{\hat y}(\mathbf{W}^*)\|_2}
= \ell_2\!\left(
\sum_{i\in D_{\hat y}} (1-\pi_{i\hat y}) \mathbf{z}_i
-\sum_{i\notin D_{\hat y}} \pi_{i\hat y} \mathbf{z}_i
\right),
\end{equation}
which is precisely the statement of Theorem~1. 

If $v_{\hat y}(\mathbf{W}^*)=0$, then \eqref{eq:kkt_stationarity} holds with $\mu^*_{\hat y}=0$ for any unit vector $\mathbf{w}^*_{\hat y}$, and the normalization in \eqref{eq:theorem1_solution} is undefined. Thus, the closed form \eqref{eq:theorem1_solution} applies for each $\hat y$ such that $v_{\hat y}(\mathbf{W}^*)\neq 0$.
\end{proof}

\section{Proof of Theorem~\ref{them3}}
\label{Appendix B}

\begin{proof}
Fix any label \(\hat y \in \widehat{\mathcal Y}\). For simplicity, let us denote
\begin{equation}
\mathbf{r} := \mathbf{r}_{\hat y},
\qquad
\mathbf{w}^* := \mathbf{w}^\star_{\hat y}. 
\end{equation}
Since both textual and visual prototypes lie on a unit hyper-sphere $\mathbb{S}^{d-1}$, we have
\begin{equation}
\|\mathbf{r}\|_2 = 1,
\qquad
\|\mathbf{w}^*\|_2 = 1.
\end{equation}
Moreover, Theorem~\ref{them1} implies $\mathbf{w} \in \mathcal \mathcal{S}=\text{span}\left\{\mathbf{z}_1,\ldots,\mathbf{z}_N\right\}$. By orthogonal decomposition, we can write
\begin{equation}
\mathbf{r} = \text{Proj}_{\mathcal S}(\mathbf{r}) + \text{Proj}_{\mathcal S^\perp}(\mathbf{r}).
\end{equation}
For simplicity, let us denote
\begin{equation}
\mathbf{r}_{\mathcal S} := \text{Proj}_{\mathcal S}(\mathbf{r}),
\qquad
\mathbf{r}_{\mathcal S^\perp} := \text{Proj}_{\mathcal S^\perp}(\mathbf{r}).
\end{equation}
Then
\begin{equation}
\mathbf{r} = \mathbf{r}_{\mathcal S} + \mathbf{r}_{\mathcal S^\perp},
\qquad
\mathbf{r}_{\mathcal S} \perp \mathbf{r}_{\mathcal S^\perp}.
\end{equation}
Since $\mathbf{w}^* \in \mathcal{S}$, we have $\mathbf{r}_{\mathcal S} - \mathbf{w}^* \in \mathcal{S}$.
Therefore,
\begin{equation}
\mathbf{r}_{\mathcal S} - \mathbf{w}^*
\perp
\mathbf{r}_{\mathcal S^\perp}.
\end{equation}
It follows from the Pythagorean theorem that
\begin{equation}
\begin{aligned}
\|\mathbf{r} - \mathbf{w}^*\|_2^2
&=
\|\mathbf{r}_{\mathcal S} + \mathbf{r}_{\mathcal S^\perp} - \mathbf{w}^*\|_2^2  \\
&=
\|(\mathbf{r}_{\mathcal S} - \mathbf{w}^*) + \mathbf{r}_{\mathcal S^\perp}\|_2^2  \\
&=
\|\mathbf{r}_{\mathcal S} - \mathbf{w}^*\|_2^2
+
\|\mathbf{r}_{\mathcal S^\perp}\|_2^2 .
\end{aligned}
\end{equation}
Expanding the first term gives
\begin{equation}
\begin{aligned}
\|\mathbf{r} - \mathbf{w}^*\|_2^2
&=
\|\mathbf{r}_{\mathcal S}\|_2^2
+
\|\mathbf{w}^*\|_2^2
-
2 \mathbf{r}_{\mathcal S}^{\top} \mathbf{w}^*
+
\|\mathbf{r}_{\mathcal S^\perp}\|_2^2 .
\end{aligned}
\end{equation}
Using
$
\|\mathbf{r}_{\mathcal S}\|_2^2 + \|\mathbf{r}_{\mathcal S^\perp}\|_2^2
=
\|\mathbf{r}\|_2^2
=
1
$
and \(\|\mathbf{w}^*\|_2^2 = 1\), we obtain
\begin{equation}
\|\mathbf{r} - \mathbf{w}^*\|_2^2
=
2 - 2 \mathbf{r}_{\mathcal S}^{\top} \mathbf{w}^*.
\end{equation}
By the Cauchy--Schwarz inequality,
\begin{equation}
\mathbf{r}_{\mathcal S}^{\top}\mathbf{w}^*
\le
\|\mathbf{r}_{\mathcal S}\|_2 \|\mathbf{w}^*\|_2
=
\|\mathbf{r}_{\mathcal S}\|_2.
\end{equation}
Therefore,
\begin{equation}
\|\mathbf{r} - \mathbf{w}^*\|_2^2
\ge
2 - 2\|\mathbf{r}_{\mathcal S}\|_2.
\end{equation}
Returning to the original notation, we have
\begin{equation}
\|\mathbf{r}_{\hat y} - \mathbf{w}^\star_{\hat y}\|_2^2
\ge
2\left(1 - \|\text{Proj}_{\mathcal S}(\mathbf{r}_{\hat y})\|_2\right).
\end{equation}
Summing over all \(\hat y \in \widehat{\mathcal Y}\) yields
\begin{equation}
\|\mathbf{R} - \mathbf{W}^\star\|_F^2
=
\sum_{\hat y \in \widehat{\mathcal Y}}
\|\mathbf{r}_{\hat y} - \mathbf{w}^\star_{\hat y}\|_2^2 \\\ge
\sum_{\hat y \in \widehat{\mathcal Y}}
2\left(1 - \|\text{Proj}_{\mathcal S}(\mathbf{r}_{\hat y})\|_2\right).
\end{equation}
Finally, since \(\mathbf{r}_{\hat y} \in \mathbb{S}^{d-1}\), we have
\begin{equation}
\|\text{Proj}_{\mathcal S}(\mathbf{r}_{\hat y})\|_2^2
+
\text{Proj}_{\mathcal S^\perp}(\mathbf{r}_{\hat y})\|_2^2
=
\|\mathbf{r}_{\hat y}\|_2^2
=
1.
\end{equation}
Hence,
\begin{equation}
\|\text{Proj}_{\mathcal S}(\mathbf{r}_{\hat y})\|_2
=
\sqrt{
1 - \|\text{Proj}_{\mathcal S^\perp}(\mathbf{r}_{\hat y})\|_2^2
}.
\end{equation}
Substituting this identity into the previous bound gives
\begin{equation}
\|\mathbf{R} - \mathbf{W}^\star\|_F^2
\ge
\sum_{\hat y \in \widehat{\mathcal Y}}
2\left(
1 -
\sqrt{
1 - \|\text{Proj}_{\mathcal S^\perp}(\mathbf{r}_{\hat y})\|_2^2
}
\right).
\end{equation}
\end{proof}

\section{Proof of Theorem~\ref{them4}}
\label{Appendix C}
\begin{proof}
\textbf{Step 1: One-step progress inequality.}
The update in Eq. (\ref{eq8}) can be rewritten as a projected gradient descent, i.e., 
\begin{equation}
\mathbf{W}^{(i)} \;=\; \Pi_{\mathcal{K}}\!\left(\mathbf{W}^{(i-1)} - \eta_i \mathbf{G}^{(i)}\right),
\end{equation}
where $\Pi_{\mathcal{K}}$ is Euclidean projection onto the feasible set
$\mathcal{K}\triangleq(\mathbb{S}^{d-1})^M$ and $\mathbf{G}^{(i)}\triangleq\nabla_{\mathbf{W}}\hat{\ell}(\mathbf{W}^{(i-1)};\mathbf{x}_i,\hat{\mathcal{Y}})$

Projection is non-expansive, so for any $\mathbf{W}\in\mathcal{K}$,
\begin{align}
\|\mathbf{W}^{(i)}-\mathbf{W}\|_F^2
&= \left\|\Pi_{\mathcal{K}}(\mathbf{W}^{(i-1)}-\eta_i \mathbf{G}^{(i)})-\Pi_{\mathcal{K}}(\mathbf{W})\right\|_F^2\\
&\le \|\mathbf{W}^{(i-1)}-\eta_i \mathbf{G}^{(i)} - \mathbf{W}\|_F^2\\
&= \|\mathbf{W}^{(i-1)}-\mathbf{W}\|_F^2 - 2\eta_i\langle \mathbf{G}^{(i)},\, \mathbf{W}^{(i-1)}-\mathbf{W}\rangle_F + \eta_i^2\|\mathbf{G}^{(i)}\|_F^2.
\end{align}
Rearranging gives
\begin{equation}
\langle \mathbf{G}^{(i)},\, \mathbf{W}^{(i-1)}-\mathbf{W}\rangle_F
\;\le\;
\frac{\|\mathbf{W}^{(i-1)}-\mathbf{W}\|_F^2-\|\mathbf{W}^{(i)}-\mathbf{W}\|_F^2}{2\eta_i}
+\frac{\eta_i}{2}\|\mathbf{G}^{(i)}\|_F^2.
\end{equation}
\textbf{Step 2: Use convexity to relate inner products to regret.}
By convexity of $\hat{\ell}(\mathbf{W};\mathbf{x}_i,\hat{\mathcal{Y}})$ w.r.t. $\mathbf{W}$,
\begin{equation}
\hat{\ell}(\mathbf{W}^{(i-1)};\mathbf{x}_i,\hat{\mathcal{Y}}) - \hat{\ell}(\mathbf{W}^*;\mathbf{x}_i,\hat{\mathcal{Y}}) \le\langle \mathbf{G}^{(i)},\, \mathbf{W}^{(i-1)}-\mathbf{W}^*\rangle_F.
\end{equation}
Combine with the inequality above and sum over $i=1$ to $N$:
\begin{align}
\sum_{i=1}^N \bigl(\hat{\ell}(\mathbf{W}^{(i-1)};\mathbf{x}_i,\hat{\mathcal{Y}}) - \hat{\ell}(\mathbf{W}^*;\mathbf{x}_i,\hat{\mathcal{Y}})\bigr)
\le
\sum_{i=1}^N
\frac{\|\mathbf{W}^{(i-1)}-\mathbf{W}^*\|_F^2-\|\mathbf{W}^{(i)}-\mathbf{W}^*\|_F^2}{2\eta_i}
+
\sum_{i=1}^N \frac{\eta_i}{2}\|\mathbf{G}^{(i)}\|_F^2.
\end{align}
\textbf{Step 3: Bound the diameter $D$ of $\mathcal K$.}
Let $D \triangleq \sup_{\mathbf{U},\mathbf{V}\in\mathcal K}\|\mathbf{U}-\mathbf{V}\|_F.$
For any $\mathbf{U}=[\mathbf{u}_1,\dots,\mathbf{u}_M],\,\mathbf{V}=[\mathbf{v}_1,\dots,\mathbf{v}_M]\in\mathcal K$,
\begin{equation}
\|\mathbf{U}-\mathbf{V}\|_F^2=\sum_{j=1}^M\|\mathbf{u}_j-\mathbf{v}_j\|_2^2.
\end{equation}
Since $\|\mathbf{u}_j\|_2=\|\mathbf{v}_j\|_2=1$, we have $\|\mathbf{u}_j-\mathbf{v}_j\|_2\le 2$ for each $j$, hence
\begin{equation}
\|\mathbf{U}-\mathbf{V}\|_F^2 \le \sum_{j=1}^M 4 = 4M
\quad\Rightarrow\quad
D \le 2\sqrt{M}.
\end{equation}
In particular, $\|\mathbf{W}^{(0)}-\mathbf{W}\|_F\le D$ for any $\mathbf{W}\in\mathcal K$.

\textbf{Step 4: Bound the two sums.}
Since $\eta_i=\rho/\sqrt{i}$ is non-increasing, the telescoping term satisfies
\begin{equation}
\sum_{i=1}^N
\frac{\|\mathbf{W}^{(i-1)}-\mathbf{W}^*\|_F^2-\|\mathbf{W}^{(i)}-\mathbf{W}^*\|_F^2}{2\eta_i}
\;\le\; \frac{\|\mathbf{W}^{(0)}-\mathbf{W}^*\|_F^2}{2\eta_N}
\;\le\; \frac{D^2}{2\eta_N}.
\end{equation}
For the second term, use $\|\mathbf{G}^{(i)}\|_F\le \epsilon$:
\begin{equation}
\sum_{i=1}^N \frac{\eta_i}{2}\|\mathbf{G}^{(i)}\|_F^2
\;\le\;
\frac{\epsilon^2}{2}\sum_{i=1}^N \eta_i
=
\frac{\epsilon^2\rho}{2}\sum_{i=1}^N \frac{1}{\sqrt{i}}
\;\le\;
\frac{\epsilon^2\rho}{2}\cdot 2\sqrt{N}
=
\epsilon^2\rho \sqrt{N}.
\end{equation}
Also, $\eta_N=\rho/\sqrt{N}$, hence
\begin{equation}
\frac{D^2}{2\eta_N} = \frac{D^2}{2}\cdot \frac{\sqrt{N}}{\rho}\leq2M\frac{\sqrt{N}}{\rho}.
\end{equation}

\paragraph{Step 5: Conclude regret bound.}
Putting everything together yields
\begin{equation}
\sum_{i=1}^N \bigl(\hat{\ell}(\mathbf{W}^{(i-1)};\mathbf{x}_i,\hat{\mathcal{Y}}) - \hat{\ell}(\mathbf{W}^*;\mathbf{x}_i,\hat{\mathcal{Y}})\bigr)
\le
\underbrace{\left(\frac{2M}{\rho} + \epsilon^2\rho\right)\sqrt{N}}_{O(\sqrt{N})}.
\end{equation}
\end{proof}

\section{Online Prototype Learning with ID-only labels for OOD Detection}
\label{appendix D}
To realize the idea, we need to first identify positive samples, i.e., those with high confidence of being ID, from the test data stream. To this end, we adopt a hard thresholding rule based on the off-the-shelf $S_{\text{MCM}}(\mathbf{x}_i;f)$ in Eq. (\ref{MCM}), i.e., 
\begin{equation}
\text{Positive}: S_{\text{MCM}}(\mathbf{x}_i;f)\geq\beta,
\end{equation}
where $\beta\in[0,1]$ is a hyper-parameter.

Given the $i$-th test sample $\mathbf{x}_i$ ($i=1,2,\ldots$) to arrive, we use $c_i^+$ to represent the cumulative number of positive samples we have ever seen so far, such that 
\begin{equation}
\label{eq46}
c_i^+=c_{i-1}^++\mathds{1}\left(S_{\text{MCM}}(\mathbf{x}_i;f)\geq\beta\right),  
\end{equation}
with initialization $c_0^+=0$ and $\mathds{1}(\cdot)$ as an indicator function.

For each $y\in\mathcal{Y}_{\mathrm{I}}$, the class prototype $\mathbf{w}_{y}$ is updated as
\begin{equation}
\label{eq47}
\mathbf{w}_{y}^{(i)}=
\begin{cases}
  \mathbf{w}_{y}^{(i-1)} \quad \text{ if } S_{\text{MCM}}(\mathbf{x}_i;f)<\beta,\\
  \ell_2\left(\mathbf{w}_{y}^{(i-1)}-\eta_i^+ \frac{\partial \hat{\ell}(\mathbf{W}^{(i-1)};\mathbf{x}_i,\mathcal{Y}_{\mathrm{I}})}{\partial \mathbf{w}_{y}^{(i-1)}}\right) \text{ otherwise},
\end{cases}
\end{equation}
where $\eta_i^+=\rho/\sqrt{c_i^+}$.

Finally, we can arrive at the online OOD scoring function for $\mathbf{x}_i$ as follows:
\begin{equation}
\label{eq48}
S_{\text{ours}}(\mathbf{x}_i;f)=  \frac{\max_{y\in\mathcal{Y}_{\mathrm{I}}}\exp(\mathbf{z}_i\cdot\mathbf{w}_{y}^{(i)}/\tau)}{\sum_{y\in\mathcal{Y}_{\mathrm{I}}}\exp(\mathbf{z}_i\cdot\mathbf{w}_{{y}}^{(i)}/\tau)},
\end{equation}
with initialization satisfies  $\mathbf{w}_{y}^{(0)}=f_{\mathcal{T}}\big(\mathcal{P}(y)\big)$ for each $y\in\mathcal{Y}_{\mathrm{I}}$. For clarity, we summarize the complete algorithm in Algorithm~\ref{alg2}. Empirical comparison can be found in Table~\ref{imagenet_mcm}.

\begin{algorithm}[t]
\caption{Online Prototype Learning with ID-only labels for OOD Detection}
\begin{algorithmic}[1]
\STATE {\bf Input:} Pre-trained CLIP-based model $f$, Prompt template $\mathcal{P}(\cdot)$, ID labels $\mathcal{Y}_{\mathrm{I}}=\left\{y_1,\ldots,y_K\right\}$.
\STATE Initialize $c_0^+=0$
\STATE Initialize $\mathbf{w}_{y}^{(0)}=f_{\mathcal{T}}\big(\mathcal{P}(y)\big)$ for each $y\in\mathcal{Y}_{\mathrm{I}}$
\WHILE{the $i$-th test sample $\textbf{x}_i$ arrives}
\STATE Update $c_i^+$ via Eq. (\ref{eq46})
\STATE Update $\mathbf{w}_y^{(i)}$ via Eq. (\ref{eq47}), $\forall y\in\mathcal{Y}_{\mathrm{I}}$
\STATE Compute $S_{\text{ours}}(\mathbf{x}_i;f)$ via Eq. (\ref{eq48})
\ENDWHILE
\end{algorithmic}
\label{alg2}
\end{algorithm}

\begin{table*}[tb]
\centering
\caption{OOD detection results on ImageNet-1K, where a VIT B/16 CLIP encoder is adopted. $\uparrow$ indicates larger values are better and vice versa. The best results in the last two columns are shown in bold. }
\label{imagenet_mcm}
\resizebox{1.0 \textwidth}{!}{%
\begin{tabular}{l|cc|cc|cc|cc|cc}
\toprule
Dataset & \multicolumn{2}{c|}{iNaturalist} & \multicolumn{2}{c|}{SUN} & \multicolumn{2}{c|}{Places} & \multicolumn{2}{c|}{Textures} & \multicolumn{2}{c}{Average} \\ 
\midrule
Metric &  AUROC$\uparrow$ &  FPR95$\downarrow$&  AUROC$\uparrow$ &  FPR95$\downarrow$&  AUROC$\uparrow$ &  FPR95$\downarrow$&  AUROC$\uparrow$ &  FPR95$\downarrow$ &  AUROC$\uparrow$ &  FPR95$\downarrow$        \\
 \midrule
\rowcolor{gray!40}  \multicolumn{11}{c}{\textbf{Zero-Shot Training-free Methods with Only ID Labels}} \\
Mahalanobis & 55.89 & 99.33 & 59.94 & 99.41 & 65.96 & 98.54 & 64.23 & 98.46 & 61.50 & 98.94 \\
Energy          & 85.09 & 81.08 & 84.24 & 79.02 & 83.38 & 75.08 & 65.56 & 93.65 & 79.57 & 82.21 \\
ZOC   & 86.09 & 87.30 & 81.20 & 81.51 & 83.39 & 73.06 & 76.46 & 98.90 & 81.79 & 85.19 \\
MCM & 94.59 & 32.20 & 92.25 & 38.80 & 90.31 & 46.20 & 86.12 & 58.50 & 90.82 & 43.93 \\
GL-MCM & 96.71 & 15.16 & 93.41 & 29.16 & 90.37 & 37.07 & 83.11 & 55.85 & 90.90 & 35.06 \\
\rowcolor{LightCyan}
Ours (Eq. (\ref{eq48})) & 96.18 & 7.45 & 93.45 & 27.80 & 88.61 & 44.17 & 90.27 & 42.50 & \textbf{92.13} & \textbf{30.48}\\
\bottomrule
\end{tabular}
}
\end{table*}

\section{Evaluation Metrics.}
We evaluate the performance of OOD detection with two widely used metrics, including 1) the false positive rate of OOD data is measured when the true positive rate of ID data reaches 95\% (FPR95); 2) the area under the receiver operating curve (AUROC) is computed to quantify the probability of the ID case receiving a higher score than OOD case. 

\section{More Ablation Studies}
\label{Appendix E}
\textbf{Temporal Shift.} Since our method dynamically updates class prototypes in a test-time adaptation manner, it is crucial to investigate its stability under temporal shifts, where OOD environments evolve over time. Specifically, we use ImageNet as the ID dataset and assume that OOD datasets change sequentially over time (e.g., I–S–P–T represents OOD dataset transitions from iNaturalist → SUN → Places → Textures). In implementation, we only initialize class prototypes via Eq. (\ref{eq15}) once at the very beginning rather than when processing a new OOD dataset. As shown in Table~\ref{temporal}, our method consistently maintains a large advantage over the AdaNeg+NegLabel baseline, validating its robustness to temporal shifts.

\textbf{Sample Order.} In our experiments, ID and OOD samples are randomly shuffled during testing, corresponding to the “Random Shuffled” scenario. To analyze the impact of sample order, we conducted additional experiments. Specifically, we consider two extreme cases: “ID First” (all ID samples are tested before any OOD samples) and “OOD First” (all OOD samples are tested before any ID samples). As shown in Table~\ref{order}, while performance does drop under these extreme settings compared to the “Random Shuffled” scenario, our method still significantly outperforms the AdaNeg+NegLabel baseline. This demonstrates the robustness and effectiveness of our method, even when the order of test samples is highly imbalanced.

\section{Analysis on Memory and Time Complexity}
\label{Appendix F}
We analyze the memory and time complexity of the proposed method and compare it with relevant baselines in Table~\ref{memory and time} to clarify its computational overhead and scalability. Note that we omit the computation cost introduced by extracting features
from pre-trained CLIP, as our method keeps the same feature extraction procedure as AdaNeg~\cite{40}.

\begin{table*}[t]
\centering
\caption{OOD detection performance under temporal shifts on ImageNet-1K.} 
\label{temporal}
\resizebox{1.0 \textwidth}{!}{%
\begin{tabular}{l|cc|cc|cc|cc|cc}
\toprule
\multirow{2}{*}{{Methods}} &
\multicolumn{2}{c|}{{iNaturalist}} &
\multicolumn{2}{c|}{{SUN}} &
\multicolumn{2}{c|}{{Places}} &
\multicolumn{2}{c|}{{Textures}} &
\multicolumn{2}{c}{{Average}} \\
\cline{2-11}
& AUROC$\uparrow$ & FPR95$\downarrow$ &
  AUROC$\uparrow$ & FPR95$\downarrow$ &
  AUROC$\uparrow$ & FPR95$\downarrow$ &
  AUROC$\uparrow$ & FPR95$\downarrow$ &
  AUROC$\uparrow$ & FPR95$\downarrow$ \\
\midrule
{AdaNeg+NegLabel} & {99.71} & {0.59} & {97.44} & {9.50} & {94.55} & {34.34} & {94.93} & {31.27} & {96.66} & {18.92}\\
\midrule
I-S-P-T & 99.69 & 0.72 & 98.35 & 5.76 & 94.69 & 29.01 & 97.00 & 17.99 & 97.43 & 13.37\\
S-P-T-I & 99.81 & 0.52 & 98.79 & 5.03 & 95.19 & 28.66 & 97.46 & 16.82 & 97.81 & 12.76\\
P-T-I-S & 99.77 & 0.55 & 98.83 & 5.89 & 94.64 & 29.61 & 96.76 & 17.93 & 97.50 & 13.50\\
T-I-S-P & 99.83 & 0.51 & 98.95 & 4.94 & 95.51 & 27.53 & 97.13 & 17.40 & 97.86 & 12.60\\
\bottomrule
\end{tabular}
}
\end{table*}

\begin{table*}[tb]
\centering
\caption{OOD detection results on ImageNet-1k with different sample order. $\uparrow$ indicates larger values are better and vice versa.} 
\label{order}
\resizebox{\textwidth}{!}{%
\begin{tabular}{l|cc|cc|cc|cc|cc}
\toprule
\multirow{2}{*}{{Methods}} &
\multicolumn{2}{c|}{{iNaturalist}} &
\multicolumn{2}{c|}{{SUN}} &
\multicolumn{2}{c|}{{Places}} &
\multicolumn{2}{c|}{{Textures}} &
\multicolumn{2}{c}{{Average}} \\
\cline{2-11}
& AUROC$\uparrow$ & FPR95$\downarrow$ &
  AUROC$\uparrow$ & FPR95$\downarrow$ &
  AUROC$\uparrow$ & FPR95$\downarrow$ &
  AUROC$\uparrow$ & FPR95$\downarrow$ &
  AUROC$\uparrow$ & FPR95$\downarrow$ \\
\midrule
{AdaNeg+NegLabel} & {99.71} & {0.59} & {97.44} & {9.50} & {94.55} & {34.34} & {94.93} & {31.27} & {96.66} & {18.92}\\
\midrule
ID First & 99.90 & 0.52 & 98.91 & 5.12 & 95.07 & 29.13 & 96.96 & 18.35 & 97.71 & 13.28 \\
OOD First & 99.89 & 0.52 & 98.85 & 5.39 & 94.86 & 30.75 & 96.90 & 18.35 & 97.63 & 13.75 \\
Random Shuffle & 99.88 & 0.50 & 98.94 & 5.00 & 95.17 & 28.55 & 97.01 & 17.12 & {97.75} & {12.79}\\
\bottomrule
\end{tabular}
}
\end{table*}

\begin{table*}[tb]
\centering
\caption{Analysis on the memory and time complexity, where computational cost is measured in seconds on an NVIDIA A800 GPU.} 
\label{memory and time}

\begin{tabular}{l|l|ccccc}
\toprule
\multirow{2}{*}{Method} & \multirow{2}{*}{Memory Complexity} & \multicolumn{5}{c}{Computation Cost}  \\
\cline{3-7} 
&  &iNaturalist &SUN &Places &Textures &Total\\
\midrule
AdaNeg+NegLabel & $2d(K+L)$ &38.05 &37.38 &52.75 &33.87 &193.02\\
Ours (Eq. (\ref{eq14})) & $d(K+L)$ &23.20 &23.13 &40.17 &20.27 &107.06\\
\bottomrule
\end{tabular}

\end{table*}

\begin{table*}[tb]
\centering
\caption{Mean and standard deviation of OOD detection performance across various random seeds using CLIP-B/16 on ImageNet-1k as the ID dataset. $\uparrow$ indicates larger values are better and vice versa.} 
\label{seed}
\resizebox{1.0 \textwidth}{!}{%
\begin{tabular}{l|cc|cc|cc|cc|cc}
\toprule
\multirow{2}{*}{{Seeds}} &
\multicolumn{2}{c|}{iNaturalist} &
\multicolumn{2}{c|}{{SUN}} &
\multicolumn{2}{c|}{{Places}} &
\multicolumn{2}{c|}{{Textures}} &
\multicolumn{2}{c}{{Average}} \\
\cline{2-11}
& AUROC$\uparrow$ & FPR95$\downarrow$ &
  AUROC$\uparrow$ & FPR95$\downarrow$ &
  AUROC$\uparrow$ & FPR95$\downarrow$ &
  AUROC$\uparrow$ & FPR95$\downarrow$ &
  AUROC$\uparrow$ & FPR95$\downarrow$ \\
\midrule
0 & 99.89 & 0.52 & 98.94 & 5.00 & 95.18 & 28.63 & 97.08 & 17.57 & {97.77} & {12.93}\\
1 & 99.89 & 0.49 & 98.94 & 4.79 & 95.09 & 28.91 & 97.16 & 16.40 & 97.77 & 12.65 \\
2 & 99.77 & 0.56 & 98.93 & 5.19 & 95.13 & 28.76 & 97.06 & 17.16 & 97.72 & 12.92 \\
3 & 99.89 & 0.51 & 98.90 & 4.92 & 95.06 & 28.84 & 97.00 & 18.82 & 97.71 & 13.27 \\
4 & 99.89 & 0.49 & 98.91 & 4.83 & 95.09 & 28.64 & 97.06 & 17.34 & 97.74 & 12.83 \\
5 & 99.90 & 0.53 & 98.90 & 5.05 & 95.01 & 29.22 & 96.96 & 17.20 & 97.69 & 13.00 \\
6 & 99.91 & 0.45 & 99.01 & 5.64 & 95.46 & 27.93 & 96.28 & 17.16 & 97.65 & 12.75 \\
7 & 99.89 & 0.50 & 98.90 & 5.03 & 95.10 & 28.70 & 97.07 & 16.86 & 97.74 & 12.77 \\
8 & 99.89 & 0.49 & 98.93 & 4.89 & 95.09 & 28.74 & 97.10 & 16.68 & 97.76 & 12.70 \\
9 & 99.91 & 0.45 & 99.01 & 4.64 & 95.46 & 27.13 & 97.28 & 15.96 & 97.91 & 12.05 \\
\midrule
Mean & 99.88 & 0.50 & 98.94 & 5.00 & 95.17 & 28.55 & 97.01 & 17.12 & 97.75 & 12.79\\
Std & 0.04 & 0.03 & 0.04 & 0.27 & 0.16 & 0.59 & 0.27 & 0.77 & 0.07 & 0.31\\
\bottomrule
\end{tabular}
}
\end{table*}

\begin{table*}[t]
\centering
\caption{Detailed OOD detection results on ImageNet-1k. $\uparrow$ indicates larger values are better and vice versa. The best results are in bold.}
\label{detailed_imagenet}
\begin{tabular}{l|l|cc|cc}
\toprule
\multirow{2}{*}{Near-/Far-OOD} & \multirow{2}{*}{Datasets}
& \multicolumn{2}{c|}{AdaNeg+NegLabel} & \multicolumn{2}{c}{Ours} \\
\cline{3-6}
& & AUROC$\uparrow$ & FPR95$\downarrow$ & AUROC$\uparrow$ & FPR95$\downarrow$ \\
\midrule
\multirow{3}{*}{Near-OOD}
& SSB-hard  & {75.11} & {74.91} & 83.64 & 55.11 \\
& NINCO       & {78.30} & {60.10} & 87.96 & 50.60 \\
& Average   & {76.70} & {67.51} & \textbf{85.80} & \textbf{52.86} \\
\midrule
\multirow{4}{*}{Far-OOD}
& iNaturalist     & {99.72} & {0.72} & 99.81 & 0.57 \\
& Textures      & {95.71} & {21.40} & 97.80 & 15.17 \\
& OpenImage-O   & {93.87} & {29.81} & 95.40 & 23.91 \\
& Average   & {96.43} & {17.31} & \textbf{97.67} & \textbf{13.22} \\
\bottomrule
\end{tabular}
\end{table*}

\begin{table*}[t]
\centering
\caption{Detailed OOD detection results on CIFAR-10. $\uparrow$ indicates larger values are better and vice versa. The best results are in bold.}
\label{detailed_c10}
\begin{tabular}{l|l|cc|cc}
\toprule
\multirow{2}{*}{Near-/Far-OOD} & \multirow{2}{*}{Datasets}
& \multicolumn{2}{c|}{AdaNeg+NegLabel} & \multicolumn{2}{c}{Ours} \\
\cline{3-6}
& & AUROC$\uparrow$ & FPR95$\downarrow$ & AUROC$\uparrow$ & FPR95$\downarrow$ \\
\midrule
\multirow{3}{*}{Near-OOD}
& CIFAR-100  & {90.93} & {35.80} & 92.66 & 29.23 \\
& TIN       & {98.63} & {5.01} & 98.77 & 3.53 \\
& Average   & {94.78} & {20.40} & \textbf{95.72} & \textbf{16.38} \\
\midrule
\multirow{5}{*}{Far-OOD}
& MNIST     & {99.96} & {0.13} & 99.98 & 0.02 \\
& SVHN      & {99.87} & {0.04} & 99.88 & 0.04 \\
& Texture   & {99.82} & {0.04} & 99.90 & 0.02 \\
& Places365 & {97.29} & {10.93} & 99.56 & 8.92 \\
& Average   & {99.26} & {2.79} & \textbf{99.83} & \textbf{2.25} \\
\bottomrule
\end{tabular}
\end{table*}

\begin{table*}[t]
\centering
\caption{Detailed OOD detection results on CIFAR-100. $\uparrow$ indicates larger values are better and vice versa. The best results are in bold.}
\label{detailed_c100}
\begin{tabular}{l|l|cc|cc}
\toprule
\multirow{2}{*}{Near-/Far-OOD} & \multirow{2}{*}{Datasets}
& \multicolumn{2}{c|}{AdaNeg+NegLabel} & \multicolumn{2}{c}{Ours} \\
\cline{3-6}
& & AUROC$\uparrow$ & FPR95$\downarrow$ & AUROC$\uparrow$ & FPR95$\downarrow$ \\
\midrule
\multirow{3}{*}{Near-OOD}
& CIFAR-10  & {79.91} & {58.24} & 85.63 & 42.18 \\
& TIN       & {89.34} & {59.90} & 93.09 & 49.23 \\
& Average   & {84.62} & {59.07} & \textbf{89.36} & \textbf{45.70} \\
\midrule
\multirow{5}{*}{Far-OOD}
& MNIST     & {97.90} & {4.18} & 99.12 & 2.30 \\
& SVHN      & {97.60} & {6.03} & 98.98 & 3.78 \\
& Texture   & {95.14} & {30.00} & 97.20 & 16.88 \\
& Places365 & {90.35} & {77.20} & 92.31 & 64.93 \\
& Average   & {95.25} & {29.35} & \textbf{96.90} & \textbf{21.97} \\
\bottomrule
\end{tabular}
\end{table*}


\end{document}